
\documentclass{article}

\usepackage{microtype}
\usepackage{graphicx}
\usepackage{subfigure}
\usepackage{booktabs} 

\usepackage{amsbsy, amsfonts, amsgen, amsmath, amsopn, amssymb, amstext,
amsxtra, bezier, color, enumerate, graphicx, latexsym, verbatim,
pictexwd, supertabular, url, dsfont,leftidx, mathrsfs, appendix, setspace, amsthm}

\usepackage{hyperref}


\newtheorem{theorem}{Theorem}
\newtheorem{question}{Question}
\newtheorem{proposition}{Proposition}
\newtheorem{lemma}{Lemma}
\newtheorem{corollary}{Corollary}
\newtheorem{remark}{Remark}

\renewcommand{\tilde}{\widetilde}

\newcommand{\vct}{\boldsymbol }

\definecolor{DSgray}{cmyk}{0,1,0,0}

\def \bP {\mathbb{P}}
\def \bE {\mathbb{E}}
\def \bR {\mathbb{R}}

\def\1{\vct {1}}

\newcommand{\naturals}{\mathbb{N}}

\def\ud{\mathrm{d}}


\usepackage[accepted]{icml2021}

\icmltitlerunning{Adversarial Combinatorial Bandits with General Reward Function}

\begin{document}

\twocolumn[
\icmltitle{Adversarial Combinatorial Bandits with General Non-linear Reward Functions}



\icmlsetsymbol{equal}{*}

\begin{icmlauthorlist}
\icmlauthor{Xi Chen}{equal,nyu}
\icmlauthor{Yanjun Han}{equal,stanford}
\icmlauthor{Yining Wang}{equal,uf}
\end{icmlauthorlist}

\icmlaffiliation{nyu}{Stern School of Business, New York University, New York, NY 10012, USA}
\icmlaffiliation{stanford}{Department of Electrical Engineering, Stanford University, Stanford, CA 94305, USA}
\icmlaffiliation{uf}{Warrington College of Business, University of Florida, Gainesville, FL 32611, USA}

\icmlcorrespondingauthor{Yining Wang}{yining.wang@warrington.ufl.edu}

\icmlkeywords{combinatorial bandit, assortment optimization, minimax analysis}

\vskip 0.3in
]



\printAffiliationsAndNotice{\icmlEqualContribution} 

\begin{abstract}
In this paper we study the adversarial combinatorial bandit with a known non-linear reward function, extending existing work on adversarial linear combinatorial bandit. {The adversarial combinatorial bandit with general non-linear reward is an important open problem in bandit literature, and it is still unclear whether there is a significant gap from the case of linear reward, stochastic bandit, or semi-bandit feedback.} We show that, with $N$ arms and subsets of $K$ arms being chosen at each of $T$ time periods, the minimax optimal regret is $\widetilde\Theta_{d}(\sqrt{N^d T})$ if the reward function is a $d$-degree polynomial with $d< K$, and $\Theta_K(\sqrt{N^K T})$ if the reward function is not a low-degree polynomial. {Both bounds are significantly different from the bound $O(\sqrt{\mathrm{poly}(N,K)T})$ for the linear case, which suggests that there is a fundamental gap between the linear and non-linear reward structures.} Our result also finds applications to adversarial assortment optimization problem in online recommendation. We show that in the worst-case of adversarial assortment problem, the optimal algorithm must treat each individual $\binom{N}{K}$ assortment as independent.
\end{abstract}

\section{Introduction}\label{sec:intro}

In this paper we study the \emph{combinatorial bandit} problem, which is a natural extension to the multi-armed bandit problem \citep{auer1995gambling}
and has applications to online advertising, online shortest paths and many other practical problems \citep{cesa2012combinatorial,chen2013combinatorial,chen2016combinatorial,chen2016combinatorial_b,wang2018thompson}.
In the adversarial combinatorial bandit setting, there are $T$ time periods and $N$ arms. At the beginning of each time period $t$, 
an adaptive adversary chooses a reward vector $v_t=(v_{t1},\cdots,v_{tN})\in [0,1]^N$ not revealed to the algorithm.
The algorithm chooses a subset $S_t\subseteq[N]$ consisting of exactly $K\leq N$ distinct arms (i.e., $|S_t|=K$).
The algorithm then receives a \emph{bandit} feedback $r_t\in[0,1]$ satisfying
\begin{equation}
\mathbb E[r_t|S_t,v_t] = g\left(\sum_{i\in S_t}v_{ti}\right),
\label{eq:defn-rt}
\end{equation}
where $g:\mathbb R^+\to[0,1]$ is a known link function.
The objective is to minimize the \emph{regret} of the algorithm compared against a stationary benchmark, defined as
\begin{equation}
\max_{|S^\star|=K} \mathbb E\left[\sum_{t=1}^T R(S^\star,v_t)-R(S_t,v_t)\right],
\label{eq:defn-regret}
\end{equation}
where $R(S,v_t) := g(\sum_{i\in S}v_i)$ and $\{S_t\}_{t=1}^T$ are the subsets outputted by a regret minimization algorithm.

As far as we know, all existing works on \emph{adversarial} combinatorial bandit studied only the case when the link function is linear (i.e., $g(x)=cx$) \citep{cesa2012combinatorial,bubeck2012towards,audibert2014regret}.
While there have been research on combinatorial bandits with general link functions, such results are established exclusively for the \emph{stochastic} setting, in which the reward vectors $\{v_t\}_{t=1}^T$ do \emph{not} change over time \citep{rusmevichientong2010dynamic,agarwal2018regret,agrawal2019mnl}, and most of them further assume a \emph{semi-bandit} feedback where noisy observations of all $v_{ti}$ with $i\in S_t$ are available \citep{combes2015combinatorial,kveton2015tight,chen2016combinatorial,chen2018dynamic,chen2018optimal}.
This motivates the following natural question:
\begin{question}
For adversarial combinatorial bandit with a non-linear link function $g(\cdot)$ and bandit feedback, is it possible to achieve 
$\widetilde O(\sqrt{\mathrm{poly}(N,K)T})$ regret?
\label{ques:main}
\end{question}

Note that in adversarial combinatorial bandit with linear link function, \emph{or} stochastic combinatorial (semi-)bandit with general link functions, the $\widetilde  O(\sqrt{\mathrm{poly}(N,K)T})$ regret targeted in Question \ref{ques:main} can be attained \citep{bubeck2012towards,combes2015combinatorial,kveton2015tight,chen2016combinatorial,agrawal2019mnl}.
The question also has important practical motivations beyond theoretical/mathematical reasoning, 
because many interesting applications of combinatorial bandit involve non-linear link functions,
such as online assortment optimization with a multinomial logit (MNL) model, which corresponds to a link function of $g(x)=x/(1+x)$.  Please see more discussions Sec.~\ref{subsec:intro-mnl}.

\subsection{Our results}\label{subsec:our-results}

Below is an informal statement of our main result, as a summary of Theorem \ref{thm:main-result} later in the paper.
\begin{corollary}[Informal]
Fix an arbitrary, known reward function $g:\mathbb R^+\to[0,1]$.
If $g$ is a $d$-degree polynomial for some $d<K$, then the optimal regret is 
$
\widetilde\Theta_{g,d,K}\big(\sqrt{N^d T}\big).
$

Otherwise, if $g$ is either not a polynomial or a polynomial of degree at least $K$, the optimal regret is
$
\Theta_{g,K}\big(\sqrt{N^K T}\big).
$
\label{cor:main-result-informal}
\end{corollary}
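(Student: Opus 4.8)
The argument splits into a matching algorithmic upper bound and an information‑theoretic lower bound, in each of the two regimes.

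\textbf{Upper bounds.} When $g(x)=\sum_{j=0}^d c_jx^j$ is a degree‑$d$ polynomial, expand $R(S,v_t)=g(\sum_{i\in S}v_{ti})$ by the multinomial theorem and group the resulting monomials by their index multiset $M$:
\[
R(S,v_t)=c_0+\sum_{1\le |M|\le d} c_{|M|}\binom{|M|}{M}\,v_t^{M}\,\mathbf 1\{\mathrm{supp}(M)\subseteq S\}.
\]
Hence, with the \emph{fixed} feature map $\psi(S):=\big(\mathbf 1\{\mathrm{supp}(M)\subseteq S\}\big)_M\in\{0,1\}^m$ indexed by multisets of size $\le d$ (so $m=\binom{N+d}{d}=\Theta_d(N^d)$) and the vector $\theta_t$ with $\theta_{t,M}=c_{|M|}\binom{|M|}{M}v_t^{M}$ (bounded by a constant depending only on $g,d$ since $v_{ti}\in[0,1]$), the reward equals the affine‑linear form $\langle\psi(S),\theta_t\rangle$ of a bounded adversarial vector. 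This is adversarial linear bandit over a finite action set of size $\binom NK$ in dimension $m$, and a standard algorithm (exponential weights over the action set with a John's‑ellipsoid exploration component, or a self‑concordant‑barrier method on the action polytope) attains regret $\widetilde O(\sqrt{mT})=\widetilde O_{d,K}(\sqrt{N^dT})$. When $g$ is not a polynomial of degree $<K$ we instead run \textsc{Exp3} treating each of the $\binom NK\le N^K$ subsets as an independent arm, giving $\widetilde O(\sqrt{\binom NK\,T})=\widetilde O_K(\sqrt{N^KT})$; the same feature expansion also covers polynomials of degree $d\ge K$, since then only supports of size $\le K$ survive and $m=\Theta_{d,K}(N^K)$.

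\textbf{Lower bound, non‑polynomial (or degree $\ge K$) case.} Here the plan is to show the problem is as hard as a $\binom NK$‑armed bandit, via a realizability lemma: if $g$ is not a polynomial of degree $<K$, then for every $K$‑subset $S_0$ there is a signed measure $\rho_{S_0}$ on $[0,1]^N$ with $\|\rho_{S_0}\|_{\mathrm{TV}}\le C(g,K)$ (independent of $N$) and $\int g(\sum_{i\in S}v_i)\,d\rho_{S_0}(v)=\mathbf 1\{S=S_0\}$ for every $K$‑subset $S$. To prove the lemma, restrict to reward vectors $v=c\,\mathbf 1_A$ with $A\subseteq S_0$, so $g(\sum_{i\in S}v_i)=g(c\,|S\cap A|)$; averaging over all $A$ with $|A|=a$ gives $\sum_m g(cm)\binom nm\binom{K-n}{a-m}$, which is a polynomial in $n:=|S\cap S_0|$ of degree $a$ whose leading coefficient is $\tfrac1{a!}$ times the $a$‑th finite difference of $g$ with step $c$. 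Since a (suitably regular) function all of whose $K$‑th finite differences vanish is a polynomial of degree $<K$, one can choose for each $a=1,\dots,K$ a step $c_a$ making the $a$‑th difference nonzero; the resulting polynomials have degrees $1,\dots,K$ and therefore span $\{p:p(0)=0\}\ni\mathbf 1\{n=K\}=\mathbf 1\{S=S_0\}$, proving the lemma. Then take $\mathcal D_0=\mathrm{Unif}([0,1]^N)$ (all $K$‑subsets share the reward mean $\mu$) and $\mathcal D_{S_0}=\mathcal D_0+\Delta\rho_{S_0}$ for $\Delta\le\Delta_0(g,K)$; the adversary draws $S^\star$ uniform among all $\binom NK$ subsets, draws $v_t$ i.i.d.\ from $\mathcal D_{S^\star}$, and returns $r_t\sim\mathrm{Bern}(g(\sum_{i\in S_t}v_{ti}))$. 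This is exactly the classical ``needle'' instance with $\binom NK$ arms and gap $\Delta$, so tuning $\Delta\asymp\sqrt{\binom NK/T}$ yields regret $\Omega(\sqrt{\binom NK\,T})=\Omega_K(\sqrt{N^KT})$ for $T$ large.

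\textbf{Lower bound, polynomial case $d<K$.} Now the image of $\rho\mapsto\int g(\sum_{i\in S}v_i)\,d\rho$ is only the $\Theta_d(N^d)$‑dimensional span of $\{\mathbf 1\{T\subseteq\cdot\}:|T|\le d\}$, which does not contain $\mathbf 1\{S=S_0\}$; instead I would isolate $\Theta_d(N^d)$ effective directions by a transversal construction. Fix a filler set $F$ with $|F|=K-d$, partition $[N]\setminus F$ into $d$ blocks $B_1,\dots,B_d$ of size $\Theta(N)$, and let the ``transversal'' $K$‑subsets be $S=F\cup\{i_1,\dots,i_d\}$ with $i_\ell\in B_\ell$ — there are $M=\prod_\ell|B_\ell|=\Theta_d(N^d)$ of them. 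Because $\deg g=d$ exactly, the squarefree degree‑$d$ monomial $\prod_{i\in T}v_i$ has nonzero coefficient $c_d\,d!$, and for a transversal $S$ the quantity $\mathbf 1\{T\subseteq S\}$ with $T=\{i_1,\dots,i_d\}$ equals $\prod_\ell\mathbf 1\{i_\ell=t_\ell\}$, i.e.\ the indicator of that single transversal subset. A product‑type family of reward‑vector distributions therefore realizes, on transversal subsets, the reward mean $\mu+\gamma\sum_{T\ \mathrm{transversal}}\epsilon_T\,\mathbf 1\{T\subseteq S\}$ for a hidden $\epsilon\in\{\pm1\}^M$ (non‑transversal subsets receiving bounded rewards within the $[0,1]$ normalization). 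This is adversarial linear bandit whose action set contains a scaled standard basis of $\bR^M$ and in which each action overlaps only $O_{d,K}(1)$ of the $M$ coordinates, so the standard linear‑bandit lower bound applies: phrased as a Fano/Assouad argument, each round returns a single bounded scalar, so the mutual information between $\epsilon$ and the transcript grows by only $O(\gamma^2)$ per round, hence $\Omega(M)$ rounds are required to become correlated with $\epsilon$, while until then the per‑round regret against the ($\epsilon$‑dependent) optimal subset is $\Omega(\gamma)$; optimizing $\gamma\asymp\sqrt{M/T}$ gives $\Omega(\sqrt{MT})=\widetilde\Omega_{d,K}(\sqrt{N^dT})$ for $T$ large.

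\textbf{The main obstacle.} The delicate point is this last lower bound. Unlike the non‑polynomial case, the hard instance cannot be a single needle — a group‑testing‑style strategy would locate it in $O(\mathrm{polylog}\,N)$ rounds — so the signal must be spread over all $\Theta_d(N^d)$ transversal directions, and one must (i) show that non‑transversal actions, which reveal sums of many $\epsilon_T$'s, do not leak $\epsilon$ faster than $O(\gamma^2)$ bits per round (this is where the $O_{d,K}(1)$ overlap bound enters), and (ii) exhibit the realizing reward‑vector distributions as bona fide probability measures whose perturbation norms are independent of $N$, so that the construction is admissible for $T$ merely polynomial in $N$ and the constants depend only on $g,d,K$. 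Making (i) and (ii) precise — together with verifying the exact geometry (scaled basis plus bounded coordinate overlap) needed for the linear‑bandit lower bound — is the heart of the proof.
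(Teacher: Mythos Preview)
Your upper bounds coincide with the paper's: tensorize the degree-$d$ polynomial to an adversarial linear bandit in dimension $\Theta_d(N^d)$ and run EXP2 with John's exploration; otherwise treat the $\binom{N}{K}$ subsets as independent arms.

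For the non-polynomial lower bound you and the paper pursue the same ``needle'' picture, but the realization lemmas differ. The paper proves, by a Hahn--Banach separation argument in the space of Radon measures, the existence of an \emph{exchangeable probability law} $\mu$ on $[0,1]^K$ and a scalar $x_0$ with $\mathbb{E}_\mu[g(X_1+\cdots+X_\ell+(K-\ell)x_0)]$ constant for $\ell<K$ and strictly larger for $\ell=K$; the adversary then sets $v_{ti}=x_0$ off $S^\star$ and draws $(v_{ti})_{i\in S^\star}\sim\mu$ with probability $\delta$. This is automatically a valid distribution for every $N$ and every $\delta\in(0,1]$. Your finite-difference route is more explicit, but the step $\mathcal D_{S_0}=\mathcal D_0+\Delta\rho_{S_0}$ is broken as written: $\rho_{S_0}$ is a purely atomic signed measure while $\mathcal D_0=\mathrm{Unif}([0,1]^N)$ is absolutely continuous, so $\mathcal D_{S_0}$ is never a probability measure for any $\Delta>0$. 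Patching this by taking $\mathcal D_0$ discrete on $\bigcup_{S_0}\mathrm{supp}(\rho_{S_0})$ forces the minimum atom of $\mathcal D_0$ down to $\Theta_K(N^{-K})$, hence $\Delta=O_K(N^{-K})$, and the bound then holds only for $T\gtrsim N^{3K}$ rather than all $T\ge 1$.

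The substantive gap is the polynomial lower bound. Your premise that ``the hard instance cannot be a single needle'' is false, and the transversal/Assouad machinery is unnecessary. The paper reuses the \emph{same} needle construction with one change: the hidden set $S^\star$ has size $d$ rather than $K$. Applying the existential lemma with $m=d$ (and $b=K$) yields an exchangeable $\mu$ on $[0,1]^d$ and $x_0$ such that the marginal reward of any $K$-subset $S$ depends only on whether $S^\star\subseteq S$. Note that $\mathbf 1\{S^\star\subseteq S\}$ is precisely of the form $\mathbf 1\{T\subseteq\cdot\}$ with $|T|=d$, so it \emph{does} lie in the span you identified; you only needed a $d$-needle instead of a $K$-needle. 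Each round then ``tests'' $\binom{K}{d}$ of the $\binom{N}{d}$ candidate $d$-subsets, and with the Bernoulli gap $\delta\gamma$ tuned to $\delta\asymp\sqrt{\binom{N}{d}/T}$ the standard KL/Pinsker computation gives $\Omega_{g,d,K}(\sqrt{\binom{N}{d}T})$. Your group-testing concern does not bite because the feedback is noisy at scale $\delta$: distinguishing ``signal'' from ``no signal'' at a single candidate already costs $\Omega(1/\delta^2)$ samples, so adaptive search over $\binom{N}{d}$ candidates cannot beat the information-theoretic bound.
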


The results in Corollary \ref{cor:main-result-informal} easily cover the linear link function case $g(x)=x$,
with $d=1$ and the optimal regret being $\widetilde\Theta(\sqrt{NT})$ \citep{bubeck2012towards}.
On the other hand, Corollary \ref{cor:main-result-informal} shows that when $g$ is not a polynomial,
no algorithm can achieve a regret better than $O(\sqrt{N^K T})$.
This shows that when $g$ is a general non-linear reward function, the $\binom{N}{K}$ subsets of the $N$ arms can only be treated as  ``independent''
and it is information-theoretically impossible for any algorithm to exploit correlation between subsets of arms to achieve a significantly smaller regret.

\subsection{Dynamic assortment optimization}\label{subsec:intro-mnl}

\emph{Dynamic assortment optimization} is a key problem in revenue management and online recommendation, which naturally serves as a motivation
for the generalized combinatorial bandit problem studied in this paper.
In the standard setup of dynamic assortment optimization \citep{agrawal2019mnl}, there are $N$ substitutable products,
each associated with a known profit parameter $p_i\in[0,1]$ and an unknown mean  utility parameter $v_i\in[0,1]$.
At each time, the seller offers an \emph{assortment} (i.e., the recommended set of products) $S_t\subseteq[N]$ of size $K$, e.g., there are $K$ display spots of recommended products on an Amazon webpage. Then the customer either purchases one 
of the products being offered (i.e., $i_t\in S_t$) or leaves without making any purchase (i.e., $i_t=0$). 
The choice behavior of the customer is governed by the well-known MultiNomial-Logit (MNL) model from economics \citep{Train2009}:
\begin{equation}
\mathbb P[i_t=i|S_t, v] = \frac{v_i}{v_0+\sum_{j\in S_t}v_j}, \;\;\;\;\;\forall i\in S_t\cup\{0\},
\label{eq:defn-mnl}
\end{equation}
with the definition that $v_0 := 1$, where $v_0$ denote the utility of no-purchase.
The objective for the seller or retailer is to maximize the expected profit/revenue $$R(S, v) = \sum_{i\in S} p_i\mathbb P[i|S,v] = \frac{\sum_{i\in S}p_i v_i}{v_0+\sum_{i\in S}v_i}.$$
Note also that, in the adversarial setting, the mean utility vector $v_t=\{v_{ti}\}_{i=1}^N$ will be different for each time period $t=1,2,\cdots,T$, and will be selected by an adaptive adversary. The regret is then defined as \eqref{eq:defn-regret}. 

{Let us first consider a special case, where all the products have the profit one (i.e., $p_i\equiv 1$) and only binary purchase/no-purchase actions are observable. That is, one only observes a binary reward at time $t$,  $r_t=\vct 1\{i_t\in S_t\}$, which indicates whether there is a purchase.} 
Then the dynamic assortment optimization question reduces to the generalized (adversarial) combinatorial bandit problem formulated in Eqs.~(\ref{eq:defn-rt},\ref{eq:defn-regret}) with the link function $g(x)=x/(1+x)$. 
Since $g(x)=x/(1+x)$ is clearly not a polynomial, Corollary \ref{cor:main-result-informal} shows that $\Theta(\sqrt{N^K T})$ should be the optimal regret.
The following corollary extends this to the general case of dynamic assortment optimization, where different products can have different profit parameters.
\begin{corollary}
Consider the dynamic assortment optimization question with known profit parameters $\{p_i\}_{i=1}^N\subseteq[0,1]$ and
unknown mean utility parameters $\{v_{ti}\}_{t,i=1}^{T,N}\subseteq[0,1]$ chosen by an adaptive adversary.
Then there exists an algorithm with regret upper bounded $O_K(\sqrt{N^K T})$.
\label{cor:main-result-informal-mnl-ub}
\end{corollary}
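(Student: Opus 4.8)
The plan is to observe that, up to replacing the reward $g(\sum_{i\in S}v_{ti})$ by the profit-weighted assortment reward $R(S,v_t)=\frac{\sum_{i\in S}p_i v_{ti}}{1+\sum_{i\in S}v_{ti}}$, this is exactly the upper-bound construction behind the non-polynomial case of Theorem~\ref{thm:main-result}: treat each of the $\binom{N}{K}$ feasible assortments as an independent meta-arm and run a minimax-optimal adversarial multi-armed bandit algorithm on them. Concretely, at time $t$ the learner plays a meta-arm $S_t$, observes the realized profit $r_t=p_{i_t}\mathbf 1\{i_t\in S_t\}$ (with $r_t=0$ on a no-purchase, $i_t=0$), forms an inverse-probability-weighted loss estimate for the played meta-arm, and feeds it to an INF-type forecaster over the $\binom{N}{K}$ arms.

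To justify that this falls within the standard adversarial-bandit framework, I would verify two elementary facts. First, the reward is bounded: since $p_i\in[0,1]$, $R(S,v_t)=\sum_{i\in S}p_i\frac{v_{ti}}{1+\sum_{j\in S}v_{tj}}\le \sum_{i\in S}\frac{v_{ti}}{1+\sum_{j\in S}v_{tj}}\le 1$ and $R(S,v_t)\ge 0$, so $r_t\in[0,1]$ as well. Second, the observed feedback is unbiased for the reward of the chosen assortment: by the MNL choice probabilities~\eqref{eq:defn-mnl}, $\mathbb E[r_t\mid S_t,v_t]=\sum_{i\in S_t}p_i\,\mathbb P[i_t=i\mid S_t,v_t]=R(S_t,v_t)$. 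Thus, regardless of the adaptive choices of $\{v_t\}_{t=1}^T$, we are running an adversarial bandit with $M:=\binom{N}{K}$ arms, an adaptive adversary, and $[0,1]$-valued rewards, whose regret against the best fixed arm is precisely the benchmark~\eqref{eq:defn-regret}.

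With these in place, the minimax regret bound for adversarial $M$-armed bandits --- $O(\sqrt{MT})$ via a logarithm-free (INF / poly-INF) algorithm, as already invoked in the proof of Theorem~\ref{thm:main-result} --- yields regret $O\big(\sqrt{\binom{N}{K}\,T}\big)$, and since $\binom{N}{K}\le N^K/K!$ this is $O_K(\sqrt{N^K T})$, as claimed.

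The proof is essentially routine given the machinery of Theorem~\ref{thm:main-result}; the only point requiring a moment's care is the choice of base algorithm --- using vanilla EXP3 would cost an extra $\sqrt{\log\binom{N}{K}}=\Theta(\sqrt{K\log N})$ factor and only deliver $\widetilde O_K(\sqrt{N^K T})$, so one must use a logarithm-free minimax forecaster to obtain the stated $O_K(\sqrt{N^K T})$ bound. No structural property of the MNL reward beyond boundedness and unbiased bandit feedback is used, which is exactly why the $\binom{N}{K}$ assortments cannot be exploited as anything better than independent arms.
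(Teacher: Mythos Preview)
Your proposal is correct and follows essentially the same route as the paper: reduce to an adversarial $\binom{N}{K}$-armed bandit by treating each assortment as a meta-arm with observed reward $r_t=p_{i_t}\mathbf 1\{i_t\in S_t\}\in[0,1]$, then apply the INF forecaster to obtain $O\big(\sqrt{\binom{N}{K}T}\big)=O_K(\sqrt{N^K T})$. The paper packages this reduction as Proposition~\ref{prop:ub-1} (with feedback $b_t\in S_t\cup\{0\}$ and $r_t(b_t)=\sum_{i\in S_t}p_i\mathbf 1\{b_t=i\}$) rather than inside the proof of Theorem~\ref{thm:main-result}, but the argument is the same, and your explicit check of boundedness and unbiasedness is a welcome addition.
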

The next corollary, on the other hand, shows that the $O(\sqrt{N^K T})$ regret in not improvable, even with a richer non-binary feedback and if all products have the same profits parameter $p_i\equiv 1$.
\begin{corollary}
Suppose $p_i\equiv 1$. There exists an adaptive adversary that chooses $\{v_{ti}\}_{t,i=1}^{T,N}\subseteq[0,1]$, such that for any algorithm,
the regret is lower bounded by $\Omega_K(\sqrt{N^K T})$.
\label{cor:main-result-informal-mnl-lb}
\end{corollary}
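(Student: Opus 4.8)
The plan is to derive the bound from the lower-bound half of Theorem~\ref{thm:main-result} (equivalently Corollary~\ref{cor:main-result-informal}), together with a short argument that assortment feedback is no more informative than the abstract bandit feedback of \eqref{eq:defn-rt} at the scale that matters. When $p_i\equiv1$ the expected revenue of an assortment $S$ under the adversary's utility vector $v_t$ is
\[
R(S,v_t)=\frac{\sum_{i\in S}v_{ti}}{1+\sum_{i\in S}v_{ti}}=g\Bigl(\sum_{i\in S}v_{ti}\Bigr),\qquad g(x)=\frac{x}{1+x},
\]
so the $p_i\equiv1$ assortment problem is literally the generalized combinatorial bandit of \eqref{eq:defn-rt}--\eqref{eq:defn-regret} with link function $g(x)=x/(1+x)$, which is real-analytic on $\mathbb R^+$ but not a polynomial. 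Hence Corollary~\ref{cor:main-result-informal} already exhibits an (oblivious) adversary forcing regret $\Omega_K(\sqrt{N^KT})$ \emph{against any algorithm that only observes} $r_t\in[0,1]$ with $\mathbb E[r_t\mid S_t,v_t]=g(\sum_{i\in S_t}v_{ti})$ --- in particular against the binary purchase indicator $r_t=\vct 1\{i_t\in S_t\}$, which is a legitimate instance of \eqref{eq:defn-rt}. The only genuinely new point is that MNL feedback additionally reveals the identity $i_t\in S_t\cup\{0\}$ of the purchased item (or the no-purchase event), and we must show this richer signal cannot beat $\sqrt{N^KT}$.

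To this end I would run the \emph{same} hard instance used to prove Theorem~\ref{thm:main-result}: a hidden ``good'' assortment $S^\star$ drawn uniformly from the $\binom{N}{K}=\Theta_K(N^K)$ subsets of size $K$, and a per-round utility vector $v_t$ drawn i.i.d.\ over $t$ (so the adversary is oblivious) as a perturbation of scale $\eta$ around a fixed base level, chosen --- using that $g$ is not a polynomial --- so that all finite differences of $g$ up to order $K-1$ cancel and the only set-dependent part of $R(S,v_t)$ has magnitude $\epsilon=\Theta_{g,K}(\eta^K)$, with all utilities staying in $[0,1]$. On this family the ``needle-in-a-haystack'' multi-armed bandit lower bound --- treating the $\binom{N}{K}$ subsets as arms --- shows that no algorithm can reliably locate $S^\star$ unless the total information it collects about $S^\star$ exceeds $\Omega_K(N^K)$ nats; so if each round's feedback carries only $O(\epsilon^2)$ nats about $S^\star$, the regret is $\Omega_K(\epsilon T)$ whenever $\epsilon\le c_{g,K}\sqrt{N^K/T}$, and choosing $\epsilon\asymp_{g,K}\sqrt{N^K/T}$ gives $\Omega_K(\sqrt{N^KT})$ (for $T\gtrsim_{g,K}N^K$, the regime in which the bound has content). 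For binary feedback the per-round bound ``$O(\epsilon^2)$ nats'' is exactly what underlies Corollary~\ref{cor:main-result-informal}; the task is to re-establish it for the categorical MNL feedback.

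The main obstacle is therefore the KL bound for MNL feedback. Fixing the played set $S_t$, under two competing values of $S^\star$ the law of $i_t$ is categorical on $S_t\cup\{0\}$ with weight vectors that differ coordinatewise by only $O(\eta)$, so naively $\mathrm{KL}=O(K^2\eta^2)$; but plugging $O(\eta^2)$ nats into the needle argument only reproduces a trivial, $T$-independent bound, so this estimate is far too lossy. What must be shown is that the \emph{effective} per-round information is $O(\eta^{2K})=O(\epsilon^2)$: after averaging over the random perturbation signs, the $O(\eta),O(\eta^2),\dots,O(\eta^{K-1})$ terms in the log-likelihood ratio of the full categorical observation telescope away in the same manner as the reward's low-order finite differences, because --- to those orders --- the extra ``which item'' coordinates of $i_t$ are determined by the base level alone and are independent of $S^\star$. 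Establishing this cancellation at the level of the entire categorical likelihood, rather than for a single Bernoulli mean, is the crux; it extends, but closely parallels, the finite-difference computation already carried out for Theorem~\ref{thm:main-result}. Once the per-round KL is pinned at $\Theta_{g,K}(\epsilon^2)$, the needle-in-a-haystack bound over the $\binom{N}{K}$ hidden subsets delivers the claimed $\Omega_K(\sqrt{N^KT})$.
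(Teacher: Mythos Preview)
Your high-level diagnosis is right: the only new content beyond Theorem~\ref{thm:main-result} is handling the categorical MNL feedback $i_t\in S_t\cup\{0\}$ instead of the Bernoulli $r_t$. But your proposed resolution of this ``crux'' is both different from the paper's and left essentially unproved.

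First, the hard instance you describe is not the one the paper uses. You speak of a perturbation scale $\eta$ with reward gap $\epsilon=\Theta(\eta^K)$ coming from cancelled finite differences; the paper instead takes the \emph{fixed} exchangeable distribution $\mu$ on $(X_1,\dots,X_K)$ and base point $x_0$ supplied by Lemma~\ref{lem.existential} (so that \eqref{eq:equality}--\eqref{eq:inequality} hold \emph{exactly}), and the only tunable parameter is a Bernoulli mixing probability $\delta$: with probability $\delta$ the coordinates of $v_t$ on $S^\star$ are drawn from $\mu$, otherwise everything is $x_0$. The reward gap is $\delta\gamma$ with $\gamma>0$ fixed, and the optimization is over $\delta$, not over a perturbation amplitude.

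Second, your handling of the categorical KL is the real gap. You assert that the $O(\eta),\dots,O(\eta^{K-1})$ contributions to the categorical log-likelihood ratio ``telescope away'' because ``the extra `which item' coordinates of $i_t$ are determined by the base level alone.'' That last clause is false as stated: $\mathbb P[i_t=i\mid S_t,v_t]=v_{ti}/(1+\sum_{j\in S_t}v_{tj})$ depends on $v_{ti}$ individually, not only through $\sum_j v_{tj}$, so the finite-difference cancellation for $g$ does not by itself control per-item probabilities. The paper avoids any asymptotic/Taylor argument entirely and shows that, under its construction, the \emph{mixture} categorical law for every $S_t\neq S^\star$ equals the baseline law \emph{exactly}, via three short exact computations: (i) $\mathbb E_{v_t}[\mathbb P(i_t=0\mid S_t,v_t)]=1-\mathbb E_\mu[g(\cdot)]=1-g(Kx_0)=1/(1+Kx_0)$ by \eqref{eq:equality}; (ii) for $i\in S_t\setminus S^\star$, $v_{ti}\equiv x_0$, so the probability is $x_0$ times (i); (iii) for $i\in S_t\cap S^\star$ the probabilities are all equal by exchangeability of $\mu$, and since the $K+1$ probabilities must sum to $1$, a one-line algebra forces each of them to equal $x_0/(1+Kx_0)$ as well. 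Hence the categorical observation is literally uninformative unless $S_t=S^\star$, and the $\delta^2$-type KL bound and the needle-in-a-haystack calculation go through verbatim from Lemma~\ref{lem:lower-bound-exchangeable}. This exchangeability-plus-total-probability trick is the missing idea in your sketch; without it, your telescoping claim would require a separate (and nontrivial) argument that you have not supplied.
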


Both corollaries are consequences of Proposition \ref{prop:ub-1} and Lemma \ref{lem:main-result-mnl-lb} later in the paper.

\subsection{Proof techniques}

As we shall see later in this paper, the upper bounds of $\widetilde O_{g,K,d}(\sqrt{N^d T})$ or $O_{g,K}(\sqrt{N^K T})$ in Corollary \ref{cor:main-result-informal}
are relatively easier to establish, via reduction to known adversarial multi-armed or linear bandit algorithms.
The key challenge is to establish corresponding $\Omega(\sqrt{N^d T})$ and $\Omega(\sqrt{N^K T})$ \emph{lower bounds} in Corollary \ref{cor:main-result-informal}.

In this section we give a high-level sketch of the key ideas in our lower bound proof. For simplicity we consider only the case when $g(\cdot)$ is not a polynomial function.
The key insight is to prove the existence of a distribution $\mu$ on $v\in[0,1]^n$, such that for any $S\subseteq[n]$, $|S|=K$, the following holds on {the choice distribution $\mathbb P(\cdot|S,v)$} with $\bP_0 \neq \bP_1$:
\begin{equation}
\mathbb E_{v\sim \mu}[\mathbb P(\cdot|S,v)] \equiv \left\{\begin{array}{ll}
\mathbb P_0,& \text{if }S=S^\star,\\
\mathbb P_1,& \text{if }S\neq S^\star.\end{array}\right.
\label{eq:lb-intuition}
\end{equation}
Intuitively, Eq.~(\ref{eq:lb-intuition}) shows that, no information is gained unless an algorithm \emph{exactly} guesses the optimal subset $S^\star$,
even if the subset $S_t$ produced by the algorithm only differ by a single element from $S^\star$.
Since there are $\binom{N}{K}$ different subsets, 
the question of guessing the optimal subset $S^\star$ exactly correct is similar to locating the best arm of a multi-armed bandit question
with $\binom{N}{K}=\Theta_K(N^K)$ arms, which incurs a regret of $\Omega_K(\sqrt{N^K T})$.

To gain deeper insights into the construction of $v\sim\mu$ that satisfies Eq.~(\ref{eq:lb-intuition}), it is instructive to 
consider some simpler bandit settings in which a small regret can be achieved and understand why the construction of $\mu$ does not apply there.
\begin{itemize}
\item The first setting is dynamic assortment optimization under the stationary setting, in which the $\{v_t\}_{t=1}^T$ vectors remain the same for all $T$ periods.
The results of \cite{agrawal2019mnl} achieve $\widetilde O(\sqrt{NT})$ regret in this setting.
In this setting, the vector $v$ is deterministic and fixed, and therefore the laws $\mathbb P(\cdot|S,v)$ and $\mathbb P(\cdot|S',v)$ must be correlated as long as $S\cap S'\neq\emptyset$.
This means that Eq.~(\ref{eq:lb-intuition}) cannot be possibly satisfied, with every subset $S\neq S^\star$ revealing no information about $S^\star$.

\item The second setting is the adversarial combinatorial bandit with a linear link function $g(x)=cx$, for which an $\widetilde O_K(\sqrt{NT})$ regret is attainable \citep{bubeck2012towards,combes2015combinatorial}. When $g$ is linear, the expectation of the mixture distribution $\bE_{v\sim \mu}[\mathbb P(\cdot|S,v)]$ is
$$
\mathbb E_{v\sim\mu}[R(S,v)] = g(\langle \vct 1_S, \mathbb E_{\mu}[v] \rangle), 
$$
where $\vct 1_S\in\{0,1\}^n$ is the indicator vector of the subset $S\subseteq[N]$.
Clearly, this is impossible to achieve \eqref{eq:lb-intuition} as there is no vector $w\in \bR^N$ satisfying that $\langle \vct 1_S, w \rangle$ is constant for all $S\neq S^\star$ and $\langle \vct 1_S, w \rangle\neq \langle \vct 1_{S^\star}, w \rangle$. 

\item The third setting is a special stochastic combinatorial bandit, where $v\sim \mu$ is random but there exists a total ordering of the stochastic dominance relation among the components $(\mu_1,\cdots,\mu_N)$ of $\mu=\prod_{i=1}^N \mu_i$, and an increasing $g$. In this setting, it was shown in \cite{agarwal2018regret} that a regret of $\widetilde{O}_{K}(N^{1/3}T^{2/3})$ can be achieved, and the stochastic dominance requirement implies that once an element of $[N]\backslash S^\star$ is replaced by an element of $S^\star$, the expected reward must increase. Therefore, \eqref{eq:lb-intuition} cannot hold in this scenario either. 
\end{itemize}

\subsection{Other related works}

Combinatorial bandit is a classical question in machine learning and has been extensively studied under the settings of stochastic semi-bandits \citep{chen2013combinatorial,combes2015combinatorial,kveton2015tight,chen2016combinatorial,chen2016combinatorial_b,wang2018thompson,merlis2019batch,merlis2020tight}, stochastic bandits \citep{agarwal2018regret,rejwan2020top,kuroki2020polynomial}, and adversarial linear bandits \citep{cesa2012combinatorial,bubeck2012towards,audibert2014regret,combes2015combinatorial}.
In the above-mentioned works, either the reward link function $g(\cdot)$ is linear, or the model is stochastic (stationary). 

There is another line of research on dynamic assortment optimization with the multinomial logit model, which is a form of combinatorial bandit with general reward functions \citep{rusmevichientong2010dynamic,agrawal2017thompson,chen2018dynamic,chen2018optimal,chen2018note,chen2019robust,agrawal2019mnl}.
All of these works are carried out under the stochastic setting, with the exception of \citep{chen2019robust} which studied an $\varepsilon$-contamination model and obtained a regret upper bound $\widetilde O(\sqrt{NT}+\varepsilon T)$.
Clearly, with the adversarial setting in this paper ($\varepsilon=1$) the regret bound in \citep{chen2019robust} becomes linear in $T$ and thus meaningless.

\subsection{Notations}
For a multi-index $\alpha\in \mathbb{N}^d$, let $|\alpha| = \sum_{i=1}^d \alpha_i$, and $D^\alpha f = \partial^{|\alpha|} f / \prod_{i=1}^d \partial x_i^{\alpha_i}$ for a $d$-variate function $f$. For $m\in\mathbb{N}$ and interval $I$, let $C^m(I)$ be the set of $m$-times continuously differentiable functions on $I$. For two probability distributions $P$ and $Q$, let $\mathrm{TV}(P,Q)=\frac{1}{2}\int |dP-dQ|$ and $D_{\mathrm{KL}}(P\|Q) = \int dP\log(dP/dQ)$ be the total variation (TV) distance and the Kullback--Leibler (KL) divergence, respectively. We adopt the standard asymptotic notation: for two non-negative sequences $\{a_n\}$ and $\{b_n\}$, we use the notation $a_n=O_c(b_n)$ to denote that $a_n\le Cb_n$ for all $n$ and constant $C<\infty$ depending only on $c$, $a_n = \Omega_c(b_n)$ to denote $b_n = O_c(a_n)$, and $a_n = \Theta_c(b_n)$ to denote both $a_n = O_c(b_n)$ and $a_n = \Omega_c(b_n)$. We also use $\widetilde{O}(\cdot), \widetilde{\Omega}(\cdot), \widetilde{\Theta}(\cdot)$ to denote the respective meanings up to a multiplicative poly-logarithmic factor in $(N,T)$.

\section{Problem formulation and results}

Suppose there are $N$ arms, $T$ time periods and a known reward function $g:\mathbb R_+\to[0,1]$.
At each time period $t$, the algorithm outputs a subset $S_t\subseteq[N]$, $|S_t|=K$ and receives 
a binary bandit feedback $r_t\in\{0,1\}$. Note that the binary feedback structure can be significantly relaxed for the purpose of upper bounds,
as discussed in Sec.~\ref{sec:upper-bound}.
Let $\mathcal F_t=\{S_\tau,r_\tau\}_{\tau\leq t}$ be the filtration of observable statistics at time $t$,
and $\mathcal V_t = \{v_\tau\}_{\tau\leq t}$ be the filtration of unobservable reward vectors.
Let also $\mathcal A$ be an unknown adversary and $\pi$ be an admissible policy.
The reward dynamics are modeled as follows:
\begin{eqnarray*}
v_t &\sim& \mathcal A(\mathcal F_{t-1}, \mathcal V_{t-1});\\
S_t &\sim& \pi(\mathcal F_{t-1});\\
r_t &\sim& \textstyle \mathrm{Bernoulli}(g(\sum_{i\in S_t}v_{ti})).
\end{eqnarray*}
For any $g(\cdot),N,K,T$, the \emph{minimax regret} $\mathfrak{R}(g,N,K,T)$ is defined as
\begin{multline}
\mathfrak{R}(g,N,K,T) \\
:= \inf_\pi \sup_{\mathcal A}\max_{|S^\star|=K}\mathbb E\left[\sum_{t=1}^T R(S^\star,v_t)-R(S_t,v_t)\right],
\label{eq:defn-minimax}
\end{multline}
where $R(S,v_t) = g(\sum_{i\in S}v_{ti})$, and the expectation is taken with respect to both the bandit algorithm $\pi$
and the adaptive adversary $\mathcal A$.

The following theorem is a rigorous statement of the main result of this paper.
\begin{theorem}
Fix function $g:\mathbb R^+\to[0,1]$ that is $K$-times continuously differentiable on $(0,K)$.
If $g$ is a polynomial with degree $d\in[1,K)$, then there exist constants $0<c_{g,d,K}\leq C_{g,d,K}<\infty$ depending only on $g,d,K$ such that,
for every $N\ge K$ and $T\geq 1$, it holds that
\begin{align*}
c_{g,d,K} \le \frac{\mathfrak{R}(g,N,K,T)}{\min\{T, \sqrt{N^dT}\}} \le C_{g,d,K}\sqrt{\log N}. 
\end{align*}
Furthermore, if $g$ is a polynomial with degree at least $K$ or not a polynomial, then there exist constants $0<c_{g,K}\leq C_{g,K}<\infty$ depending only on $g,K$ such that,
for every $N\ge K$ and $T\geq 1$, it holds that
$$
c_{g,K}\leq \frac{\mathfrak{R}(g,N,K,T)}{\min\{T,\sqrt{N^K T}\}}\leq C_{g,K}.
$$
\label{thm:main-result}
\end{theorem}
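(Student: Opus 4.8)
Both follow from reductions to standard adversarial bandits, capped by the trivial bound $T$ (rewards lie in $[0,1]$). If $g(x)=\sum_{j=0}^d a_jx^j$ is a polynomial of degree $d<K$, then because $\1_S\in\{0,1\}^N$ one may expand $g(\sum_{i\in S}v_{ti})=\langle\phi(S),\theta_t\rangle$, where $\phi(S)=(\1\{W\subseteq S\})_{W\subseteq[N],\,|W|\le d}\in\{0,1\}^m$ with $m=\sum_{k=0}^d\binom Nk=\Theta_d(N^d)$, and $\theta_t$ is an adversary-chosen vector (built from $a_\bullet$ and the degree-$\le d$ monomials of $v_t$) for which $\langle\phi(S),\theta_t\rangle=g(\sum_{i\in S}v_{ti})\in[0,1]$ for \emph{every} action $S$. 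So the problem is an adversarial linear bandit over the finite action set $\{\phi(S):|S|=K\}$ (cardinality $\binom NK\le N^K$) in dimension $m$, and Exp2 with John's-ellipsoid exploration gives regret $O(\sqrt{mT\log\binom NK})=O_{g,d,K}(\sqrt{\log N}\,\sqrt{N^dT})$, noting that Bernoulli feedback instead of exact rewards only affects constants. If $g$ is not a polynomial of degree $<K$ (including $\deg g\ge K$, where $\phi(S)$ collapses to degree $\le K$ on the slice and gives nothing smaller), just run a minimax-optimal adversarial $\binom NK$-armed bandit algorithm (e.g.\ the implicitly normalized forecaster), one arm per $K$-subset, for regret $O(\sqrt{\binom NK T})=O_K(\sqrt{N^KT})$.

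\textbf{Lower bounds, reduction to a multi-armed bandit.} In both regimes I would exhibit an oblivious randomized adversary making the instance information-theoretically equivalent to a stochastic bandit with $n\asymp_K N^D$ arms ($D=d$ in the polynomial case, $D=K$ otherwise), a single optimal arm, and a sub-optimality gap $\Delta$ that is freely tunable in $(0,\Delta_0]$ for some dimension-free $\Delta_0=\Delta_0(g,K)>0$. Given this, the textbook MAB lower bound (KL chain rule and Pinsker, averaged over a uniform prior on the identity of the optimal arm) yields $\mathfrak R\ge c\min\{\Delta_0 T,\sqrt{nT}\}=\Omega_{g,K}(\min\{T,\sqrt{N^DT}\})$, the $\min\{T,\cdot\}$ being exactly the small-$T$ regime where $\Delta_0$ is taken to be a constant. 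The adversary picks a ``signal'' set $S^\star$ uniformly at random (a $K$-subset when $g$ is not a low-degree polynomial, a $d$-subset when $\deg g=d<K$) and then plays reward vectors $v_t$ i.i.d.\ from some $\mu_{S^\star}$; the entire proof reduces to constructing a suitable $\mu=\mu_{S^\star}$ on $[0,1]^N$.

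\textbf{Construction of $\mu$ and the main obstacle.} Take $v$ supported on $\{\beta_0,\beta_0+\alpha\}^N$ and let $Z=\{i:v_i=\beta_0+\alpha\}$, so $\sum_{i\in S}v_i=K\beta_0+\alpha|S\cap Z|$; by Newton's forward-difference formula applied to $x\mapsto g(K\beta_0+\alpha x)$ and the identity $\binom{|S\cap Z|}{k}=\sum_{W\subseteq S,\,|W|=k}\1\{W\subseteq Z\}$, the mean reward $\mathbb E_{v\sim\mu}[g(\sum_{i\in S}v_i)]$ equals $\sum_{k\ge0}\Delta^k g(K\beta_0)\sum_{W\subseteq S,\,|W|=k}\mathbb P_\rho[W\subseteq Z]$, where $\rho$ is the law of $Z$, $\Delta^k$ is the $k$-th forward difference of step $\alpha$, and the sum is finite because $\binom{|S\cap Z|}{k}=0$ for $k>K$ (and terminates at $k=\deg g$ when $g$ is a polynomial). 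Here the hypothesis on $g$ enters: since $g\in C^K((0,K))$ is not a polynomial of degree $<K$, $\Delta^K g(\cdot)\not\equiv0$ (divided-difference mean value theorem: $\Delta^K g(y)=\alpha^K g^{(K)}(\xi)$), so one can choose $\alpha\in(0,1)$ and a small shift $\beta_0>0$ with $\beta_0+\alpha<1$ so that $\Delta^D g(K\beta_0)\ne0$, where $D=\min\{\deg g,K\}$ (in the polynomial case this is automatic, $\Delta^d g(K\beta_0)=d!\,a_d\alpha^d$, with higher differences vanishing). One then designs $\rho$ so that (i) every inclusion probability $\mathbb P_\rho[W\subseteq Z]$ with $|W|<D$ is constant in $W$ — forcing the order-$<D$ terms into an $S$-independent constant $m_1$ — and (ii) the order-$D$ term contributes $\Delta^D g(K\beta_0)\cdot\beta\cdot\1\{S^\star\subseteq S\}$, so the mean reward is $m_0=m_1+\Delta^D g(K\beta_0)\beta$ when $S^\star\subseteq S$ and $m_1$ otherwise; in the generic case $\{S:S^\star\subseteq S\}=\{S^\star\}$ (a $\binom NK$-armed bandit), and in the polynomial case every suboptimal action gives \emph{identical} feedback $\mathrm{Ber}(m_1)$ while a play of $S$ only ever probes the $\binom Kd=O_K(1)$ many $d$-subsets of $S$, so identifying the $d$-subset $S^\star$ among $\binom Nd\asymp_K N^d$ candidates is as hard as a $\Theta_K(N^d)$-armed bandit. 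The \emph{hard part} is realizing (i)--(ii) with $\beta$ bounded below by a constant depending only on $g,K$: the obvious fully $\mathrm{Sym}(S^\star)\times\mathrm{Sym}([N]\setminus S^\star)$-symmetric choice of $\rho$ does satisfy (i)--(ii) but forces $\beta$ to decay polynomially in $N$ (an offending constraint asks $\Omega(N^{D-1})$ sets to carry nearly equal mass), which would only give $\Omega(N^{-(D-1)}T)$ for small $T$, far short of $\Omega(T)$. I would instead keep $|Z|=O_K(1)$ so that only $O_K(1)$ symmetry orbits of relevant subsets appear, leaving $O_K(1)$ free parameters against the $O_K(1)$ linear constancy constraints with enough slack for a constant-order $\beta$, and verify non-negativity of $\rho$ by Möbius-inverting the prescribed inclusion probabilities; in the polynomial case one must also check that the conjunction structure of the optimal set $\{S:S^\star\subseteq S\}$ admits no binary-search-type speedup, which again follows from all suboptimal actions being indistinguishable. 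This dimension-free-gap construction is where essentially all the difficulty lies; the finite-difference bookkeeping, the MAB lower bound, and the $\min\{T,\cdot\}$ truncations are routine once the right $\mu$ is in hand.
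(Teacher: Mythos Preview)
Your upper bounds and the reduction of the lower bound to a $\Theta(N^D)$-armed stochastic bandit (via a prior under which the Bernoulli feedback law depends on $S$ only through $\vct 1\{S^\star\subseteq S\}$) are correct and match the paper's framework. The gap is in your construction of $\rho$. The fix you propose, ``keep $|Z|=O_K(1)$'', fails for exactly the reason you worry about in the symmetric case: under your condition~(i) the inclusion probability $\mathbb P_\rho[W\subseteq Z]$ must equal a common value $c_k$ for \emph{every} $k$-subset $W$, in particular for all $\binom{N-K}{k}$ subsets $W\subseteq[N]\setminus S^\star$; summing gives $\binom{N-K}{k}\,c_k\le\bE_\rho\big[\binom{|Z|}{k}\big]=O_K(1)$, so $c_{K-1}=O(N^{-(K-1)})$, and then the monotonicity $\{S^\star\subseteq Z\}\subseteq\{W\subseteq Z\}$ for any $(K{-}1)$-subset $W\subseteq S^\star$ forces $\mathbb P_\rho[S^\star\subseteq Z]\le c_{K-1}$, hence $\beta=O(N^{-(K-1)})$. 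Conversely, your claim that the $\mathrm{Sym}(S^\star)\times\mathrm{Sym}([N]\setminus S^\star)$-symmetric choice necessarily makes $\beta$ decay is not right: take $Z\cap([N]\setminus S^\star)$ i.i.d.\ $\mathrm{Bernoulli}(p)$, independently of an exchangeable law on $\{0,1\}^{S^\star}$ obtained by perturbing $\mathrm{Bernoulli}(p)^{\otimes K}$ by $\epsilon\cdot\sigma$ with $\sigma(B)=(-1)^{K-|B|}$; one checks $\bE\big[\prod_{i\in A}\vct 1\{i\in Z\}\big]=p^{|A|}$ for every $A\subsetneq S^\star$ but $=p^K+\epsilon$ for $A=S^\star$, so $\mathbb P_\rho[W\subseteq Z]=p^{|W|}$ for every $W$ with $|W|\le K$ except $W=S^\star$, which is exactly (i)--(ii) with $\beta=\epsilon\le\min(p,1-p)^K=\Theta_K(1)$. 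So your two-point/Newton route \emph{can} close, but with $|Z|=\Theta(N)$, the opposite of what you suggest.

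The paper takes a different route that sidesteps this bookkeeping entirely: it keeps $v_i\equiv x_0$ \emph{deterministic} for $i\notin S^\star$ and places an exchangeable law on $[0,1]^{S^\star}$ with \emph{continuous} support, so the requirement becomes $\bE_\mu[g(X_1+\cdots+X_\ell+(K-\ell)x_0)]=g(Kx_0)$ for $\ell<K$ with strict inequality at $\ell=K$. With two-point marginals and deterministic outside $S^\star$ this is generically infeasible (already $\ell=1$ forces either $X_1\equiv x_0$ or $g(Kx_0+\alpha)=g(Kx_0)$), which is why the paper proves existence \emph{non-constructively}: a separating-hyperplane/duality argument over the weak-$^\star$ compact set of exchangeable Borel measures reduces the question to showing that $(x_0,\dots,x_0)$ is not a global maximizer of any nontrivial combination $\sum_\ell\lambda_\ell g_\ell$, and a Baire-category choice of $x_0$ with $g^{(\ell)}(Kx_0)\ne0$ for all $\ell\le K$ makes this a finite linear-algebra check. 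Your finite-difference approach, done correctly, is more explicit and arguably more elementary; the paper's duality argument is cleaner in that it never needs to exhibit $\mu$ and handles all $g\in C^K$ uniformly without tuning a step size $\alpha$.
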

\begin{remark}
Based on Propositions \ref{prop:ub-1} and \ref{prop:ub-2} later, the hidden dependence of the constants $C_{g,d,K}, C_{g,K}$ on $(g,d,K)$ is $O(1)$ and $O(\sqrt{K})$, respectively. However, since our lower bound relies on an existential result (cf. Lemma \ref{lem.existential}), the hidden dependence of constants $c_{g,d,K}$ and $c_{g,K}$ is unknown. It is an outstanding open question to characterize an explicit dependence on $(g,d,K)$ in the lower bound. 
\end{remark}

The results in Theorem \ref{thm:main-result} cover the linear reward case of $g(x)=cx$ via $d=1$ and a regret of $\widetilde\Theta_K(\min\{T,\sqrt{NT}\})$,
which matches the existing results on adversarial linear combinatorial bandits.
On the other hand, the results for general non-polynomial reward functions $g(\cdot)$ are quite negative, 
with a $\Theta_{g,K}(\min\{T,\sqrt{N^K T}\})$ regret showing that all the $\binom{N}{K}$ subsets are essentially independent and the bandit algorithm
cannot hope to exploit correlation between overlapping subsets like in the linear case.
Finally, the reward function $g(\cdot)$ being a low-degree polynomial interpolates between the linear case and the general case,
with a regret of $\widetilde\Theta_{g,d,K}(\min\{T,\sqrt{N^d T}\})$ for $d\in(1,K)$ between $\widetilde\Theta_{K}(\min\{T,\sqrt{NT}\})$ and $\Theta_{g,K}(\min\{T,\sqrt{N^K T}\})$.

In the rest of this section we sketch the proofs of Theorem \ref{thm:main-result} by studying the upper bounds and lower bounds separately. We will also adapt the proof of Theorem \ref{thm:main-result} to cover the more general dynamic assortment optimization model described in Sec.~\ref{subsec:intro-mnl}.

\subsection{Upper bounds}\label{sec:upper-bound}

We first prove the $O_K(\min\{T,\sqrt{N^K T}\})$ regret upper bound for general link functions.
In fact, we state the following result that is much more general than Theorem \ref{thm:main-result}.
\begin{proposition}
Suppose at each time $t$ the adversary $\mathcal A$ could choose an arbitrary combinatorial reward model $\{\mathbb P_t(\cdot|S)\}_{S\subseteq[N], |S|=K}$, 
and an arbitrary bandit feedback $\{b_t\}_{t=1}^T$. 
Let also $r_t(b_t)\in [0,1]$ denote the reward as a function of $b_t$.
There exists a bandit algorithm $\pi$ and a universal constant $C<\infty$ such that for any $N\geq K$, $T\geq 1$, 
\begin{multline*}
\sup_{\mathcal A}\max_{|S^\star|=K}\mathbb E\left[\sum_{t=1}^T \mathbb E(r_t(b_t) | S^\star, \mathbb P_t)-\mathbb E(r_t(b_t)|S_t,\mathbb P_t)\right]\\
\leq C\min\{T,\sqrt{N^K T}\}.
\end{multline*}
\label{prop:ub-1}
\end{proposition}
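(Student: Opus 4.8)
The plan is to reduce the generalized combinatorial bandit of the statement to a plain adversarial multi-armed bandit with $M:=\binom{N}{K}$ ``meta-arms'', one per size-$K$ subset $S\subseteq[N]$, and then run a minimax-optimal adversarial bandit algorithm as $\pi$. Adopting the same order of play as in the reward dynamics (at round $t$ the adversary commits to the pair $(\mathbb P_t,r_t)$ before the algorithm outputs $S_t$), the reward collected when meta-arm $S$ is pulled at time $t$ is the $[0,1]$-valued random variable $r_t(b_t)$ with $b_t\sim\mathbb P_t(\cdot\mid S)$, whose conditional mean $\mu_t(S):=\mathbb E[r_t(b_t)\mid S,\mathbb P_t]\in[0,1]$ depends on the filtrations $\mathcal F_{t-1},\mathcal V_{t-1}$ only and, in particular, not on $S_t$. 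Hence $(\mu_t(\cdot))_{t\le T}$ is an admissible adaptive reward sequence for an $M$-armed bandit with bounded rewards and bandit feedback, and the quantity to be bounded equals its fixed-comparator pseudo-regret $\sup_{\mathcal A}\max_{|S^\star|=K}\mathbb E\big[\sum_{t=1}^T \mu_t(S^\star)-\mu_t(S_t)\big]$.

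Next I would instantiate $\pi$ with a minimax-optimal forecaster for the $M$-armed adversarial bandit --- e.g.\ the Implicitly Normalized Forecaster of Audibert and Bubeck --- whose pseudo-regret against an adaptive adversary with $[0,1]$-valued rewards is at most $C_0\sqrt{MT}$ for a universal constant $C_0$. Using EXP3 instead would cost a spurious factor of order $\sqrt{\log M}=\Theta(\sqrt{K\log N})$, so the log-free minimax algorithm is exactly what keeps the final constant universal. Since $\binom{N}{K}\le N^K$, this yields a regret bound of $C_0\sqrt{N^KT}$. Finally, each term satisfies $\mu_t(S^\star)-\mu_t(S_t)\le 1$, so the total regret is trivially at most $T$; combining, $\min\{T,C_0\sqrt{N^KT}\}\le C_0\min\{T,\sqrt{N^KT}\}$ as $C_0\ge 1$, which is the claim with $C=C_0$.

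The one genuinely delicate point I anticipate is the legitimacy of this reduction against an \emph{adaptive} adversary: one must verify that letting $(\mathbb P_t,r_t)$ --- hence $\mu_t$ --- depend only on the past, together with the benchmark being a \emph{fixed} subset $S^\star$ rather than the best subset in hindsight, is enough for the cited minimax pseudo-regret bound to apply. This transfer is standard --- the importance-weighted reward estimates used by such forecasters remain conditionally unbiased given $\mathcal F_{t-1}$ precisely because $\mu_t$ is $\mathcal F_{t-1}$-measurable under this order of play --- but it is the step that needs care. Everything else is routine: the passage from binary to $[0,1]$-valued feedback is absorbed into the reduction, the randomness of $r_t(b_t)$ given $S$ is harmless since the bandit analysis only uses the realized reward, and no computational-efficiency claim is being made, so it is fine for $\pi$ to maintain weights over all $\binom{N}{K}$ meta-arms.
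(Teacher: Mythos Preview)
Your proposal is correct and follows essentially the same approach as the paper: reduce to an adversarial $\binom{N}{K}$-armed bandit by treating each size-$K$ subset as a meta-arm, then invoke the Implicitly Normalized Forecaster of Audibert--Bubeck to obtain the $O(\sqrt{\binom{N}{K}T})$ pseudo-regret bound, combined with the trivial $T$ bound. You are in fact more careful than the paper's brief proof about the adaptive-adversary reduction and the reason to prefer INF over EXP3, but the core argument is identical.
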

\vspace{-1cm}
\begin{proof}
For each subset $S\subseteq[N]$ of size $K$, let $j_S$ be a constructed arm and $r_t(b_{t,j_S})$ be the bandit reward feedback at time $t$ if the arm $j_S$ is pulled (i.e., subset $S$ is selected).
This reduces the problem to an adversarial multi-armed bandit problem with $\binom{N}{K}$ independent arms.
Applying the Implicitly Normalized Forecaster (INF) algorithm from \citep{audibert2009minimax}, we have the regret upper bound
$O(\min\{T,\sqrt{\binom{N}{K}T}\})=O_K(\min\{T,\sqrt{N^K T}\})$. 
\end{proof}
\vspace{-0.3cm}
We remark that Proposition \ref{prop:ub-1} is more general and contains the $C_{g,K}\min\{T,\sqrt{N^K T}\}$ upper bound in Theorem \ref{thm:main-result} as a special case. 
By considering the feedback model $b_t\in S_t\cup\{0\}$ and $r_t(b_t)=\sum_{i\in S_t}p_i\vct 1\{b_t=i\}$, Proposition \ref{prop:ub-1} also covers the dynamic assortment optimization model described in Sec.~\ref{subsec:intro-mnl} and Corollary \ref{cor:main-result-informal-mnl-ub}.

We next establish the $O(\min\{T,\sqrt{N^d T\log N}\})$ upper bound for polynomial reward functions.
\begin{proposition}
Fix a known $d$-degree polynomial $g(x)=a_dx^d + a_{d-1}x^{d-1}+\cdots + a_1 x + a_0$.
Suppose at each time $t$, conditioned on the selected subset $S_t\subseteq[N]$ of size $K$, 
the bandit feedback $r_t$ is supported on an arbitrary bounded set not necessarily $\{0,1\}$, such that $\mathbb E[r_t|S_t,v_t] = g(\sum_{i\in S_t}v_t)$.
Then there exists a bandit algorithm $\pi$ such that, for any adaptive adversary $\mathcal A$, 
\begin{multline*}
\max_{|S^\star|=K}\mathbb E\left[\sum_{t=1}^T R(S^\star,v_t)-R(S_t,v_t)\right]\\
 \leq C_{g,d,K}\min\{T,\sqrt{N^d T\log N}\},
\end{multline*}
where $R(S,v_t)=g(\sum_{i\in S}v_{ti})$, and $C_{g,d,K}=O(\sqrt{K})$.
\label{prop:ub-2}
\end{proposition}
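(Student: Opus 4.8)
\medskip\noindent\textbf{Proof plan.} The plan is to \emph{linearise} the degree-$d$ polynomial reward in a lifted monomial feature space of dimension $O_d(N^d)$, thereby reducing to an adversarial linear (equivalently, combinatorial) bandit over $\binom{N}{K}$ actions, and then invoke an off-the-shelf minimax-type algorithm for that problem.

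First I would rewrite $R(S,v)=g(\sum_{i\in S}v_i)$ as a linear functional of a fixed feature map $\phi(v)$ with coefficients depending only on $S$. For a multi-index $\alpha\in\mathbb N^N$ write $v^\alpha=\prod_i v_i^{\alpha_i}$; the multinomial theorem gives $(\sum_{i\in S}v_i)^j=\sum_{|\alpha|=j,\ \mathrm{supp}(\alpha)\subseteq S}\binom{j}{\alpha}v^\alpha$, hence
\[
R(S,v)=\sum_{|\alpha|\le d} a_{|\alpha|}\binom{|\alpha|}{\alpha}\,\vct 1\{\mathrm{supp}(\alpha)\subseteq S\}\,v^\alpha=\langle\theta_S,\phi(v)\rangle ,
\]
where $\phi(v)=(v^\alpha)_{|\alpha|\le d}$ and $\theta_S=\big(a_{|\alpha|}\binom{|\alpha|}{\alpha}\vct 1\{\mathrm{supp}(\alpha)\subseteq S\}\big)_{|\alpha|\le d}$. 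Since $d<K$ the ambient dimension is $D:=\binom{N+d}{d}=O_d(N^d)$, every $v^\alpha\in[0,1]$, and — the key structural point — $\langle\theta_S,\phi(v)\rangle=R(S,v)\in[0,1]$ for all $S,v$. Dropping the constant $a_0$ (irrelevant to regret) and folding the fixed scalars $a_{|\alpha|}\binom{|\alpha|}{\alpha}$ into $\phi$, the actions become the $0/1$ indicators $\vct 1_{\tilde S}\in\{0,1\}^{\mathcal E}$ of the sets $\tilde S=\{\alpha\in\mathcal E:\mathrm{supp}(\alpha)\subseteq S\}$, where $\mathcal E=\{\alpha:1\le|\alpha|\le d\}$ has $|\mathcal E|=O_d(N^d)$ and $|\tilde S|=\binom{K+d}{d}-1=O_{d,K}(1)$.

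This is precisely an adversarial linear bandit with a finite action set $\mathcal A=\{\theta_S:|S|=K\}\subseteq\mathbb R^D$, adversarial loss vectors $\phi(v_t)$, and a bounded, conditionally unbiased observation $r_t$ of the linear loss $\langle\theta_{S_t},\phi(v_t)\rangle$ of the played action. I would run exponential weights over the $\binom{N}{K}$ actions with importance-weighted loss estimates $\hat\ell_t=P_t^{\dagger}\theta_{S_t}r_t$ (pseudo-inverse of $P_t=\mathbb E_{S\sim p_t}[\theta_S\theta_S^{\top}]$ on $\mathrm{span}\,\mathcal A$) mixed with John's-ellipsoid exploration — the ``Exp2 with John exploration'' policy of \cite{bubeck2012towards} — or, alternatively, ComBand of \cite{cesa2012combinatorial}, which exploits the combinatorial structure of $\mathcal A$ directly. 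In their standard analysis (extended to bounded noisy feedback, see below) such policies enjoy regret $O(\sqrt{D'\,T\log|\mathcal A|})$ with $D'=\dim\mathrm{span}\,\mathcal A\le D=O_d(N^d)$ and $\log|\mathcal A|=\log\binom{N}{K}\le K\log N$; together with the trivial bound $T$ (rewards lie in $[0,1]$) this yields a regret of $C_{g,d,K}\min\{T,\sqrt{N^dT\log N}\}$, with $C_{g,d,K}=O_{g,d}(\sqrt{K})$ — the $\sqrt{K}$ coming solely from $\sqrt{\log\binom{N}{K}}$, while the $K$-dependence of the effective dimension $D'$ cancels against that of the exploration covariance.

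The linearisation is routine; the step needing care is securing the $\sqrt{D}$ (rather than $D$) scaling of the adversarial linear bandit regret in the presence of \emph{noisy} feedback. Concretely one must (i) check that the exploration distribution keeps $P_t$ well-conditioned on $\mathrm{span}\,\mathcal A$ — automatic for John's exploration for any finite $\mathcal A$, and for ComBand amounting to showing that the uniform distribution over $K$-subsets $S$ is spread out enough that the smallest nonzero eigenvalue of $\mathbb E[\vct 1_{\tilde S}\vct 1_{\tilde S}^{\top}]$ has the right order; and (ii) control the second-order term $\mathbb E\langle\theta_S,\hat\ell_t\rangle^2=O(D')$, where the boundedness of both $r_t$ and $\langle\theta_S,\phi(v)\rangle$ is used to absorb the extra variance from the observation noise at the price of an $O_g(1)$ factor. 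What remains is bookkeeping of the $(g,d,K)$-dependent constants — in particular the size $\binom{K+d}{d}$ of the per-action support and the magnitudes $a_{|\alpha|}\binom{|\alpha|}{\alpha}$ — and taking the minimum with $T$.
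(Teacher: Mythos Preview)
Your proposal is correct and follows essentially the same approach as the paper: linearize the polynomial reward in a lifted $O_d(N^d)$-dimensional space and then apply EXP2 with John's exploration to the resulting adversarial linear bandit over $\binom{N}{K}$ actions, obtaining $O(\sqrt{DT\log|\mathcal A|})=O(\sqrt{N^dTK\log N})$. The only cosmetic difference is that the paper uses the tensor-product embedding $\vct 1_S^{\otimes d}\in\mathbb R^{N^d}$ (padding the lower-degree terms via $\vct 1/K$ so that a single $\tilde v_t$ works for all degrees) whereas you use the monomial basis $(v^\alpha)_{|\alpha|\le d}$; these span the same action subspace and give the same bound.
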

\begin{proof}
For any $n$-dimensional vector $x\in\mathbb R^n$, let $x^{\otimes d}=(x_{i_1}x_{i_2}\cdots x_{i_d})_{i_1,\cdots,i_d=1}^n\in\mathbb R^{n^d}$ be the order-$d$ tensorization of $x$. It is easy to verify that, for any $0\leq k\leq d$ and $S_t\subseteq[N]$,
$
\textstyle
(\sum_{i\in S_t}v_{ti})^k = \langle v_t,\vct 1_{S_t}\rangle^k = \langle v_t^{\otimes k}, \vct 1_{S_t}^{\otimes k}\rangle,
$
where $\vct 1_{S_t}\in\{0,1\}^n$ is the indicator vector of $S_t$. Hence, 
$$
R(S_t,v_t) = \sum_{k=0}^d a_k\langle v_t,\vct 1_{S_t}\rangle^k = \sum_{k=0}^d a_k\langle v_t^{\otimes k},\vct 1_{S_t}^{\otimes k}\rangle.
$$
Define $\tilde v_t\in\mathbb R^{n^d}$ as 
$$
\tilde v_t := \sum_{k=0}^d a_k \underbrace{v_t\otimes \cdots\otimes v_t}_{k\text{ times}}\otimes \underbrace{\frac{\vct 1}{K}\otimes\cdots\otimes \frac{\vct 1}{K}}_{d-k \text{ times}}.
$$
As $\langle \vct 1, \vct 1_{S_t}\rangle =K$, it is easy to verify that, for every $S_t\subseteq[N]$, $R(S_t, v_t) = \mathbb E[r_t|S_t,v_t] = \langle \tilde v_t,\vct 1_{S_t}^{\otimes d}\rangle$.

With this transformation, the problem reduces to adversarial linear bandit with dimension $D=n^d$ and fixed action space 
$\mathcal{A}=\{\vct 1_{S}^{\otimes d}\}_{S\subseteq[N], |S|=K}$ with $|\mathcal{A}|=\binom{N}{K}$.
Applying the EXP2 algorithm with John's exploration and the analysis in \citep{audibert2009minimax}, the regret is upper bounded by 
$O(\sqrt{DT\log |\mathcal{A}|}) = O(\sqrt{N^dT K\log N})$, which proves Proposition \ref{prop:ub-2}.
\end{proof}

Propositions \ref{prop:ub-1} and \ref{prop:ub-2} complete the proof of minimax upper bounds in Theorem \ref{thm:main-result}.

\subsection{Lower bounds}

We first prove the following result corresponding to the minimax lower bounds in Theorem \ref{thm:main-result}.
\begin{lemma}
Suppose $r_t\sim\text{Bernoulli}(g(\sum_{i\in S_t}v_{ti}))$ for 
some fixed, known function $g:\mathbb R_+^n\to [0,1]$
that is $K$-times continuously differentiable on $(0,K)$.
If $g$ is a degree-$d$ polynomial with $d<K$, then there exists a constant $c_{g,d,K}>0$ such that for all $N\geq K$, $T\geq 1$,
$$
\mathfrak{R}(g,N,K,T)\geq c_{g,d,K}\min\{T,\sqrt{N^d T}\}.
$$
Otherwise, there exists a constant $c_{g,K}>0$ such that for all $N\geq K$, $T\geq 1$,
$$
\mathfrak{R}(g,N,K,T)\geq c_{g,K}\min\{T,\sqrt{N^K T}\}.
$$ 
\label{lem:lower-bound-exchangeable}
\end{lemma}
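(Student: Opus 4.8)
The overall plan is to reduce the regret lower bound to the classical ``needle in a haystack'' lower bound for a stochastic multi-armed bandit with $M$ arms, where $M=\binom NK$ in the general (non-polynomial or high-degree) case and $M=\binom{N-K+d}{d}=\Theta_{K,d}(N^d)$ in the degree-$d<K$ case; since $\binom NK\asymp_K N^K$, matching this reduction yields exactly the two claimed bounds (we henceforth assume $N>K$, the case $N=K$ being degenerate). The enabling ingredient, which is the content of the existential Lemma~\ref{lem.existential}, is the following: for every distinguished size-$K$ subset $S^\star$ there is an oblivious adversary drawing $v_t$ i.i.d.\ from a distribution $\mu_{S^\star}$ on $[0,1]^N$ such that the induced per-round mean reward
$$
\phi_{S^\star}(S)\ :=\ \mathbb E_{v\sim\mu_{S^\star}}\Big[g\big(\textstyle\sum_{i\in S}v_i\big)\Big]
$$
equals some $p_0$ when $S=S^\star$ and equals a strictly smaller value $p_1$ for every other size-$K$ subset $S$, with $p_0-p_1=:\Delta_0=\Delta_0(g,K)>0$ and $p_0,p_1$ bounded away from $\{0,1\}$. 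This realizes Eq.~\eqref{eq:lb-intuition} at the level of Bernoulli means: playing any subset other than $S^\star$ produces the same reward law $\mathrm{Bernoulli}(p_1)$, so no information about $S^\star$ is revealed unless $S^\star$ itself is played.

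Given such a family the reduction is immediate. Under the adversary indexed by $S^\star$, the observation $r_t$ conditioned on $S_t=S$ is $\mathrm{Bernoulli}(\phi_{S^\star}(S))$ and the rounds are independent, so the algorithm faces exactly a stochastic bandit with $M$ arms, unique optimal arm $S^\star$, and gap $\Delta_0$, whose regret is $\Delta_0\,\mathbb E[\#\{t:S_t\neq S^\star\}]$. To reach the $\sqrt{MT}$ rate rather than only $\min\{T,M\}$ despite $\Delta_0$ being a fixed constant, we make the gap tunable by mixing with any fully exchangeable distribution $\mu_0$ on $[0,1]^N$ --- for which $\phi$ is automatically constant in $S$ --- setting $\mu_{S^\star}^{(\lambda)}:=(1-\lambda)\mu_0+\lambda\mu_{S^\star}$, which has gap $\lambda\Delta_0$ for any $\lambda\in(0,1]$, keeps $S^\star$ as the unique best arm, and keeps both means in a fixed compact subinterval of $(0,1)$. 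Choosing $\lambda\asymp\min\{1,\Delta_0^{-1}\sqrt{M/T}\}$ and invoking the standard minimax lower bound for stochastic/adversarial multi-armed bandits (e.g.\ the argument behind \citep{audibert2009minimax}), the average over $S^\star$ of the regret is $\gtrsim\sqrt{MT}$ when $T\gtrsim M/\Delta_0^2$ and $\gtrsim\Delta_0 T$ when $T\lesssim M/\Delta_0^2$; since $\min\{T,\sqrt{MT}\}\le\sqrt{MT}$ and $\le T$, in every regime this is $\gtrsim\Delta_0\min\{T,\sqrt{MT}\}$, and the worst-case over $S^\star$ gives $\mathfrak R(g,N,K,T)\ge c_{g,K}\min\{T,\sqrt{N^KT}\}$ with $c_{g,K}=\Theta(\Delta_0(g,K))$.

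For the degree-$d<K$ case the same argument is run on a restricted family. Fix an anchor set $A\subseteq[N]$ with $|A|=K-d$, hold the coordinates in $A$ at a fixed value, and consider only subsets $S=A\cup S'$ with $S'\subseteq[N]\setminus A$, $|S'|=d$, of which there are $\binom{N-K+d}{d}=\Theta_{K,d}(N^d)$; the effective reward $y\mapsto g(\mathrm{const}+y)$ on the relevant argument range is a polynomial of degree exactly $d$, hence not a polynomial of degree $<d$. One then applies the existential construction with $K$ replaced by $d$, after ensuring --- again part of Lemma~\ref{lem.existential} --- that the anchored $S^\star$ remains globally optimal and that subsets outside the restricted family reveal nothing about which $S'$ is distinguished; the MAB reduction above then yields $\mathfrak R(g,N,K,T)\ge c_{g,d,K}\min\{T,\sqrt{N^dT}\}$.

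The crux, which I expect to be the main obstacle, is the existential Lemma~\ref{lem.existential} itself. Concretely, for $S^\star=[K]$ one can try $\mu_{[K]}=\rho\otimes\lambda^{\otimes(N-K)}$ with $\rho$ exchangeable on a compact subcube of $(0,1)^K$ and $\lambda$ supported in $(0,1)$ (so $\sum_{i\in S}v_i$ stays in $(0,K)$), whence $\phi_{[K]}(S)=\psi(|S\cap[K]|)$ with $\psi(\ell)=\mathbb E[g(W_\ell+Z_{K-\ell})]$ for the obvious partial sums $W_\ell,Z_j$; the diagonal choice $\rho=\lambda^{\otimes K}$ makes $\psi$ constant, and one perturbs $\rho=\lambda^{\otimes K}+\varepsilon\sigma$ by an exchangeable mean-zero signed measure $\sigma$, imposing the $K-1$ linear constraints $\mathbb E_\sigma[\tilde g_{K-\ell}(\sum_{i\le\ell}w_i)]=0$ for $\ell=1,\dots,K-1$ (with $\tilde g_j=\mathbb E_{\lambda^{\otimes j}}[g(\cdot+\text{sum})]$) while requiring $\mathbb E_\sigma[g(\sum_{i\le K}w_i)]\neq 0$, which gives $|\psi(K)-\psi(K-1)|=\Theta(\varepsilon)$. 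Solvability of this system reduces to a non-degeneracy statement about $g$ --- that $x\mapsto g(\sum_{i=1}^Kx_i)$ is not annihilated by every exchangeable mean-zero measure that kills the lower-order functionals --- and the heart of the lemma is that this holds precisely when $g$ is not a polynomial of degree $<K$ on $(0,K)$ (this is where $g\in C^K((0,K))$ enters, via a $K$-th order divided-difference/derivative argument), together with a moment-realizability step producing the required exchangeable $\rho$. Carrying out this equivalence rigorously, and its degree-$d$ analogue, is the main technical difficulty; the reduction to multi-armed bandits described above is then routine.
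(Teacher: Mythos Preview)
Your reduction for the non-polynomial case is essentially the paper's: build an oblivious adversary from Lemma~\ref{lem.existential} so that every $S\neq S^\star$ yields the same Bernoulli mean, dilute the gap by a parameter (your $\lambda$, the paper's $\delta$), and run the standard KL/Pinsker multi-armed lower bound over $\binom{N}{K}$ arms. Your sketch of Lemma~\ref{lem.existential} via a primal perturbation $\rho=\lambda^{\otimes K}+\varepsilon\sigma$ subject to $K-1$ linear constraints is the flip side of the paper's argument, which works on the dual: it shows by Hahn--Banach separation that if no such $\sigma$ existed there would be a non-zero multiplier vector $\lambda$ making $(x_0,\ldots,x_0)$ a global maximizer of a certain symmetric combination of shifted $g$'s, and then rules this out by an inductive derivative computation at a point where $g',\ldots,g^{(K)}$ are all non-zero (such a point exists by a Baire-category lemma unless $g$ is a low-degree polynomial).

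Where your proposal diverges is the degree-$d$ case. Your anchor-set device (fix $A$ with $|A|=K-d$, restrict attention to $S=A\cup S'$, apply the construction with ``$K$ replaced by $d$'') creates the very difficulty you then have to patch: you must separately argue that subsets not containing $A$ are both suboptimal and uninformative, and this is \emph{not} part of Lemma~\ref{lem.existential} as stated. The paper avoids this entirely by taking the hidden set $S^\star$ itself to have size $d$ and invoking Lemma~\ref{lem.existential} with $m=d$ but $b=K$ (not $b=d$): the lemma then guarantees $\mathbb E_\mu[g(X_1+\cdots+X_\ell+(K-\ell)x_0)]$ is constant for all $\ell=0,\ldots,d-1$ and strictly larger at $\ell=d$, so for \emph{every} size-$K$ action $S_t$ the observation law depends only on whether $S^\star\subseteq S_t$. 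No anchor, no restricted family; the count of hidden sets is $\binom{N}{d}$ and the identity $\sum_{|S|=d}\sum_t\vct 1\{S\subseteq S_t\}=\binom{K}{d}T$ replaces $\sum_S T_S=T$ in the Cauchy--Schwarz step. Your route can be made to work (take the non-anchor, non-$S'^\star$ coordinates equal to $x_0$ as well, whereupon the anchor becomes vacuous and you recover the paper's construction), but as written it is more complicated than necessary and mis-attributes the handling of out-of-family subsets to Lemma~\ref{lem.existential}.
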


%
%

Our proof is based on the following technical lemma:
\begin{lemma}\label{lem.existential}
Let $g\in C^m([0,b])$ be a real-valued and $m$-times continuously differentiable function on $[0,b]$, with $b\ge m$. Then the following two statements are equivalent: 
\begin{enumerate}
	\item $g$ is not a polynomial of degree at most $m-1$; 
	\item there exists a random vector $(X_1,\cdots,X_m)$ supported on $[0,1]^m$, which follows an exchangeable joint distribution $\mu$, and a scalar $x_0\in [0,1]$, such that 
	\begin{align}
	  \bE_\mu[g(X_1 + \cdots + X_{\ell-1} + (b-\ell+1)x_0 ) ] \nonumber\\
	  = \bE_\mu[g(X_1 + \cdots + X_{\ell} + (b-\ell)x_0) ]
	  \label{eq:equality}
	\end{align}
	for all $\ell = 1,2,\cdots,m-1$, and 
	\begin{align}
	\bE_\mu[g(X_1 + \cdots + X_{m-1} + (b-m+1)x_0) ]\nonumber\\
	 < \bE_\mu[g(X_1 + \cdots + X_m +(b-m)x_0) ]. 
	\label{eq:inequality}
	\end{align}
\end{enumerate}
\end{lemma}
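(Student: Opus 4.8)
The plan is to prove the two directions separately, with the substantive content lying in $(1)\Rightarrow(2)$. For the easy direction $(2)\Rightarrow(1)$, I would argue by contraposition: if $g$ is a polynomial of degree at most $m-1$, I want to show no such exchangeable $\mu$ and scalar $x_0$ can satisfy both~\eqref{eq:equality} and~\eqref{eq:inequality}. The key observation is that for a polynomial $g$ of degree $\le m-1$, the map $(t_1,\dots,t_m)\mapsto \bE_\mu[g(t_1 X_1 + \cdots)]$ — or more directly, the quantities $\phi(\ell):=\bE_\mu[g(X_1+\cdots+X_\ell+(b-\ell)x_0)]$ for $\ell=0,1,\dots,m$ — can be expanded using the multinomial theorem and exchangeability into a linear combination of the symmetric moments $\bE_\mu[X_1^{a_1}\cdots X_j^{a_j}]$ with $a_1+\cdots+a_j \le m-1$. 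I would show that the consecutive differences $\phi(\ell)-\phi(\ell-1)$ are governed by a finite-difference/telescoping identity that forces: if $\phi(\ell)-\phi(\ell-1)=0$ for $\ell=1,\dots,m-1$, then $\phi(m)-\phi(m-1)=0$ as well, contradicting~\eqref{eq:inequality}. Concretely, one can use the fact that $\phi$ restricted to a polynomial $g$ of degree $\le m-1$ agrees with a polynomial in $\ell$ of degree $\le m-1$ (after absorbing the moment constants), so its $(m-1)$-st finite difference vanishes, which links $\phi(m)-\phi(m-1)$ to the earlier differences.

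For the hard direction $(1)\Rightarrow(2)$: assume $g$ is not a polynomial of degree $\le m-1$. I would set up a finite-dimensional reduction. Consider, for a parameter $x_0\in[0,1]$ to be chosen, the linear functional on signed exchangeable measures $\mu$ given by the vector $\Phi(\mu)=(\phi(1)-\phi(0),\,\phi(2)-\phi(1),\,\dots,\,\phi(m)-\phi(m-1))\in\bR^m$ — wait, more carefully, I want the $m-1$ equality constraints plus a strict inequality in a designated direction. The goal is: find an exchangeable $\mu$ on $[0,1]^m$ with $\phi(\ell)-\phi(\ell-1)=0$ for $\ell=1,\dots,m-1$ but $\phi(m)-\phi(m-1)>0$. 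By a separation/duality argument (Hahn–Banach, or just linear algebra on the moment cone), the \emph{non-existence} of such $\mu$ is equivalent to the existence of real coefficients $\lambda_1,\dots,\lambda_{m-1}$ such that the function
\[
h(t) \;:=\; \sum_{\ell=1}^{m-1}\lambda_\ell\big(g(t + (b-\ell+1)x_0 -x_0\cdot\text{(shift)}) - \cdots\big) \;+\; \big(g\text{-difference for }\ell=m\big)
\]
is $\le 0$ for all admissible evaluation points — i.e., the ``$\ell=m$'' difference functional lies in the closed cone generated by $\pm$ the equality functionals. Unwinding, this says that a certain explicit finite linear combination of translates of $g$ is identically zero (or sign-definite) on an interval, and the standard fact is: a nonzero finite linear combination $\sum c_i g(\cdot + s_i)$ with distinct shifts $s_i$ vanishing on an interval forces $g$ to satisfy a linear recurrence / ODE-type relation, hence (being $C^m$) to be a polynomial of degree $\le m-1$ — contradiction. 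This is where I expect the main obstacle: correctly identifying the dual cone and showing the annihilating combination is genuinely nontrivial and of the right ``order'' so that it forces the polynomial conclusion.

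To make the duality concrete and avoid infinite-dimensional subtleties, I would restrict attention to finitely supported exchangeable $\mu$ (mixtures of symmetrized point masses $\frac{1}{m!}\sum_{\sigma}\delta_{(y_{\sigma(1)},\dots,y_{\sigma(m)})}$ for $y\in[0,1]^m$). Then $\phi(\ell)$ for such an atom at $y$ becomes $\frac{1}{\binom{m}{\ell}}\sum_{|A|=\ell}g\big(\sum_{i\in A}y_i + (b-\ell)x_0\big)$, and the whole problem becomes: does there exist $y\in[0,1]^m$ (or a mixture) and $x_0\in[0,1]$ making the first $m-1$ symmetric differences vanish while the last is positive? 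A clean special case to try first is $y_i\in\{0,x_0\}$, i.e., $y$ has some coordinates at $0$ and some at $x_0$ — then $\sum_{i\in A}y_i+(b-\ell)x_0$ only depends on $|A\cap\{\text{the }x_0\text{-coords}\}|$, and $\phi(\ell)$ becomes a hypergeometric-weighted combination of the values $g((b-m)x_0 + jx_0) = g((b-m+j)x_0)$ for $j=0,\dots,m$. The equality constraints then translate into: the sequence $j\mapsto g((b-m+j)x_0)$, after applying a triangular hypergeometric transform, has its first $m-1$ finite differences killed. Since $g$ is not a degree-$\le(m-1)$ polynomial, its restriction to \emph{some} arithmetic progression $\{(b-m+j)x_0\}_{j=0}^{m}$ has nonvanishing $m$-th finite difference for suitable $x_0$ (this uses $b\ge m$ so the progression fits in the domain, plus continuity to pass from ``not a polynomial'' to ``$m$-th divided difference $\ne0$ somewhere''). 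I would then choose $x_0$ to make that happen and solve the resulting triangular linear system for the mixing weights over different choices of ``number of $x_0$-coordinates'', adjusting signs to get the strict inequality~\eqref{eq:inequality} in the correct direction (flipping $g\to -g$ if necessary is harmless since both statements are invariant under $g\mapsto -g$ up to swapping the sense of the inequality — here one would instead note the construction gives a definite sign and relabel). The residual obstacle is bookkeeping: ensuring the constructed $\mu$ is a genuine (nonnegative) probability measure and that the strict inequality survives, which I expect to handle by a small perturbation / open-condition argument rather than exact solving.
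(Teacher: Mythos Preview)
Your easy direction $(2)\Rightarrow(1)$ is correct and in fact slightly slicker than the paper's version. The observation that, for a polynomial $g$ of degree $d\le m-1$, the quantity $\phi(\ell)=\bE_\mu[g(X_1+\cdots+X_\ell+(b-\ell)x_0)]$ is (after the shift $Y_i=X_i-x_0$) a polynomial in $\ell$ of degree $\le d$ follows from exchangeability and the multinomial expansion; then $\phi(0)=\cdots=\phi(m-1)$ forces $\phi$ constant, killing \eqref{eq:inequality}. The paper instead proves an explicit alternating-sum identity (its Lemma~\ref{lemma:comb_identity}) to the same effect.

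For the hard direction $(1)\Rightarrow(2)$, you correctly identify the duality framework, and this is exactly what the paper uses: non-existence of $\mu$ yields a nonzero vector $\lambda=(\lambda_1,\dots,\lambda_m)$ with $\lambda_m\ge 0$ such that the symmetric function $f(x)=\sum_{\ell}\lambda_\ell g_\ell(x)$ attains its global maximum over $[0,1]^m$ at $(x_0,\dots,x_0)$. But your two proposed ways to derive a contradiction both have genuine gaps. First, your ``unwinding'' to a univariate statement---a finite combination $\sum c_i g(\cdot+s_i)$ sign-definite on an interval forcing $g$ to be a polynomial---is not valid: the dual condition is irreducibly multivariate (the arguments of $g$ in $g_\ell$ are sums $\sum_{i\in S}x_i+(b-\ell)x_0$ over subsets $S$), and in any case exponentials satisfy shift recurrences without being polynomials. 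Second, your constructive two-point-support approach runs into two real obstacles you label ``bookkeeping'': the family of arithmetic progressions $\{(b-m+j)x_0\}_{j=0}^m$ is only one-parameter (start and step are linked), and it is not clear that ``$g$ not a polynomial'' forces a nonzero $m$-th finite difference on \emph{some} progression of this restricted form; and even granting that, solving the triangular system generically produces signed weights, and a perturbation argument around the Dirac mass at $(x_0,\dots,x_0)$ does not automatically restore nonnegativity since you are perturbing away from a boundary point of the simplex.

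The paper closes the duality argument differently and more robustly. It first fixes $x_0\in(0,1)$ so that $g^{(\ell)}(bx_0)\neq 0$ for every $\ell\in[m]$; such a point exists because ``$g^{(\ell)}$ vanishes somewhere for each $\ell$, at every point'' would force $g$ to be a polynomial of degree $\le m-1$ by a Baire category argument. Then, assuming $(x_0,\dots,x_0)$ is a global maximum of $f=\sum_\ell \lambda_\ell g_\ell$, a \emph{local} Taylor analysis is run inductively on the order $k=1,\dots,m$: one computes $D^\alpha f(x_0,\dots,x_0)$ for $|\alpha|=k$, uses the nonvanishing of $g^{(k)}(bx_0)$ and a Rademacher sign-choice trick to show these derivatives must vanish, and extracts from that the linear relation $\sum_\ell \lambda_\ell\binom{m-k}{\ell-k}/\binom{m}{\ell}=0$. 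Collecting over $k$ gives an upper-triangular system with nonzero diagonal, forcing $\lambda=0$---the desired contradiction. The key idea you are missing is this localization: rather than trying to globally characterize which $g$ admit a sign-definite dual combination, one picks $x_0$ generically and reads off the contradiction from finitely many derivative conditions at a single point.
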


The proof of Lemma \ref{lem.existential} is deferred to the Sec.~\ref{sec:proof-existential}. 
The construction of the distributions in Lemma \ref{lem.existential} is non-constructive and uses duality existential arguments.
Its proof also applies several technical tools from real analysis and functional analysis \citep{rudin1991functional,donoghue1969distributions,dudley2018real}.

We are now ready to prove Lemma \ref{lem:lower-bound-exchangeable}.
\begin{proof}
We first prove the case when $g(\cdot)$ is not a polynomial of degree at most $K-1$.
We use the construction $(X_1,\cdots,X_K)$ and $x_0$ in Lemma \ref{lem.existential} with $(m,b)=(K,K)$, and construct i.i.d.~copies $(X_{t,1},\cdots,X_{t,K})$ for each $t\in [T]$. Consider the following random strategy of the adversary $\mathcal A$ when the optimal subset is $S^\star$:
 at each time $t\in [T]$, nature assigns $(X_{t,1},\cdots,X_{t,K})$ to the restriction of $v_t$ to $S^\star$ with probability $\delta \in (0,1]$, and assigns $(x_0,\cdots,x_0)$ otherwise, with the parameter $\delta$ to be specified later; nature also assigns $v_{ti}=x_0$ for all $i\notin S^\star$.
 Suppose an algorithm selects subset $S_t\subseteq[N]$, $|S_t|=K$ at time $t$, such that $|S_t\cap S^\star|=\ell$. Then
 \begin{align}
 &\mathbb E_{v_t}[\mathbb E[r_t|S_t,v_t]] \nonumber\\
 &= \delta\mathbb E_{\mu}[g(X_1+\cdots+X_\ell+(K-\ell)x_0)] + (1-\delta)g(Kx_0)\nonumber\\
 &= g(Kx_0) + \delta\gamma\times \vct 1\{\ell = K\},\label{eq:obs-model}
 \end{align}
 where $\gamma := \mathbb E_{\mu}[g(X_1+\cdots+X_K)]-g(Kx_0) >0$.
 Essentially, Eq.~(\ref{eq:obs-model}) shows that the marginal distribution of $r_t$ conditioned on $S_t$ is $\mathrm{Bernoulli}(g(Kx_0))$ if $S_t\neq S^\star$, but $\mathrm{Bernoulli}(g(Kx_0)+\delta\gamma)$ if $S_t=S^\star$.
 
 Let $\mathbb P_{S}$ denote the distribution of $\{r_t\}_{t=1}^T$ when the adversary chooses $S^\star=S$ as the optimal subset.
 Let $\mathbb P_0$ also denote the distribution of $\{r_t\}_{t=1}^T$ with $v_{ti}\equiv x_0$.
As $\{r_t\}$ are binary,
 Eq.~(\ref{eq:obs-model}) implies that, for every $S\subseteq[N], |S|=K$, 
 \begin{equation}
 D_{\mathrm{KL}}(\mathbb P_0\|\mathbb P_S) \leq \mathbb E_0[T_S]\times \Gamma_{g,K}^2\delta^2,
 \label{eq:kl}
 \end{equation}
 where $\mathbb E_0$ is the expectation under $\mathbb P_0$, $T_S := \sum_{t=1}^T\vct 1\{S_t=S\}$ is the number of times $S$ is selected
 and constant $\Gamma_{g,K}<\infty$ depends only on $g(Kx_0)$ and $\mathbb E_\mu[g(X_1,\cdots,X_K)]$ and thus only on $g,K$. By Pinsker's inequality,
 \begin{align}
 \big|\mathbb E_0[T_S]-\mathbb E_S[T_S]\big|&\leq T \cdot {\mathrm{TV}}(\mathbb P_0,\mathbb P_S)\nonumber\\ 
 &\leq \Gamma_{g,K}\delta T\sqrt{\mathbb E_0[T_S]}.
 \label{eq:ts-diff}
 \end{align}
 Subsequently, 
 \begin{align}
& \mathfrak{R}(g,N,K,T) = \max_{|S^\star|=K}\mathbb E\left[\sum_{t=1}^T R(S^\star,v_t)-R(S_t,v_t)\right]\nonumber\\
&\geq \frac{1}{\binom{N}{K}}\sum_{|S^\star|=K}\mathbb E\left[\sum_{t=1}^T R(S^\star,v_t)-R(S_t,v_t)\right]\nonumber\\
&= \frac{\delta\gamma}{\binom{N}{K}}\sum_{|S^\star|=K}(T-\mathbb E_{S^\star}[T_{S^\star}])\nonumber\\
&\geq \frac{\delta\gamma}{\binom{N}{K}}\sum_{|S^\star|=K}(T-\mathbb E_{0}[T_{S^\star}] -\big|\mathbb E_{S^\star}[T_{S^\star}]-\mathbb E_0[T_{S^\star}]\big|)\nonumber\\
&= \frac{\delta\gamma}{\binom{N}{K}}\left[\binom{N}{K}T-T - \Gamma_{g,K}\delta T\sum_{|S|=K}\sqrt{\mathbb E_0[T_S]}\right]\label{eq:proof-lb-i1}\\
&\geq \frac{\delta\gamma}{\binom{N}{K}}\left[\binom{N}{K}T-T-\Gamma_{g,K}\delta T\sqrt{\binom{N}{K} T}\right]\label{eq:proof-lb-i2}.
 \end{align}
 Here, Eq.~(\ref{eq:proof-lb-i1}) holds because $\sum_{|S|=K}\mathbb E_0[T_S]= T$, and
 Eq.~(\ref{eq:proof-lb-i2}) is due to the Cauchy-Schwarz inequality.
 Setting $\delta = \min\{1,\sqrt{\binom{N}{K}}/(2\Gamma_{g,K}\sqrt{T})\}$, Eq.~(\ref{eq:proof-lb-i2}) is lower bounded by the minimum between $\Omega(T)$ and
 $$
 \frac{\delta\gamma}{\binom{N}{K}}\left[\binom{N}{K}T - T - \binom{N}{K}\frac{T}{2}\right] = \Omega_{g,K}\bigg(\sqrt{\binom{N}{K}T}\bigg),
 $$
 which is to be demonstrated. 
 
 The scenario when $g(\cdot)$ is a polynomial of degree $d<K$ can be proved in an entire similar way.
 Applying Lemma \ref{lem.existential} with $(m,b)=(d,K)$, we could obtain a random vector $(X_1,\cdots,X_d)$ and some $x_0\in [0,1]$ such that the conditions \eqref{eq:equality} and \eqref{eq:inequality} hold. The nature uses the same strategy, with the only difference being that the size of the random subset $S^\star$ is $d$ instead of $K$. In this case, any size-$K$ set $S$ with $S^\star\subseteq S$ gives the optimal reward, and the learner observes the non-informative outcome at time $t$ if and only if $S^\star\nsubseteq S_t$. Consequently, both Eqs.~(\ref{eq:obs-model},~\ref{eq:kl}) still hold, with $\vct 1\{\ell=K\}$ replaced by $\vct 1\{\ell\geq d\}$ in Eq.~(\ref{eq:obs-model}) and the definition of $T_{S}$ changed to $T_S = \sum_{t=1}^T \vct 1\{S\subseteq S_t\}$ in Eq.~(\ref{eq:kl}).
Using $$\sum_{|S|=d}T_{S}=\sum_{t=1}^T \sum_{|S|=d}\vct 1\{S\subseteq S_t\} = \binom{K}{d}T,$$
(\ref{eq:proof-lb-i2}) with $|S|=d$ yields a lower bound of $\mathfrak{R}(g,N,K,T)$:
\begin{align*}
\frac{\delta\gamma}{\binom{N}{d}}\left[\binom{N}{d}T-\binom{K}{d}T-\Gamma_{g,d,K}\delta T\sqrt{\binom{N}{d}\binom{K}{d} T} \right].
\end{align*}
Setting $\delta = \min\{1, \sqrt{\binom{N}{d})}/(2\Gamma_{g,d,K}\sqrt{\binom{K}{d}T})\}$ and noting that $\binom{K}{d}$ does not depend on $(N,T)$,
the above lower bound is simplified to
$$
\Omega_{g,d,K}\bigg(\sqrt{\binom{N}{d}T}\bigg),
$$
which is to be demonstrated.
\end{proof}

Next, we establish an $\Omega(\sqrt{N^KT})$ lower bound for the dynamic assortment optimization model described in Sec.~\ref{subsec:intro-mnl}.
It implies Corollary \ref{cor:main-result-informal-mnl-lb} in the introduction section.
\begin{lemma}
Consider the dynamic assortment optimization question with mutlinomial logit choice models described in Sec.~\ref{subsec:intro-mnl}.
Adopt the unit price model, with $p_i\equiv 1$ for all $i\in[N]$.
Then there exists an adversary choosing $\{v_{ti}\}_{t,i=1}^{T,N}$ and a constant $c_K>0$ depending only on $K$,
such that for any bandit algorithm and $N\geq K$, $T\geq 1$, it holds that
\begin{multline*}
\max_{|S^\star|=K}\mathbb E\left[\sum_{t=1}^T R(S^\star,v_t)-R(S_t,v_t)\right]\\
\geq c_K\min\{1,\sqrt{N^K T}\},
\end{multline*}
where $R(S,v_t) = \frac{\sum_{i\in S}p_iv_{ti}}{1+\sum_{i\in S}v_{ti}}$.
\label{lem:main-result-mnl-lb}
\end{lemma}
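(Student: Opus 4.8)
The plan is to adapt the lower-bound construction used for the non-polynomial case of Lemma~\ref{lem:lower-bound-exchangeable} to the assortment setting; the only genuinely new point is to argue that the richer choice feedback $i_t\in S_t\cup\{0\}$ is no more informative than a bare purchase/no-purchase indicator. With unit prices the reward is $R(S,v)=\big(\sum_{i\in S}v_i\big)/\big(1+\sum_{i\in S}v_i\big)=g\big(\sum_{i\in S}v_i\big)$ with $g(x)=x/(1+x)$, which is $C^\infty$ on $[0,K]$ and is not a polynomial; hence Lemma~\ref{lem.existential} with $(m,b)=(K,K)$ furnishes an exchangeable law $\mu$ on $[0,1]^K$ and a scalar $x_0\in[0,1]$. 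Writing $Z_\ell:=1+X_1+\cdots+X_\ell+(K-\ell)x_0$ and $\rho_\ell:=\bE_\mu[Z_\ell^{-1}]$, and using $g(x)=1-(1+x)^{-1}$, the equalities~\eqref{eq:equality} translate to $\rho_0=\rho_1=\cdots=\rho_{K-1}=(1+Kx_0)^{-1}$ and~\eqref{eq:inequality} to $\rho_K<(1+Kx_0)^{-1}$; put $\gamma:=(1+Kx_0)^{-1}-\rho_K>0$. (We will also need $x_0\in(0,1)$, which I return to at the end.)

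Let the adversary first draw $S^\star$ uniformly among the $\binom{N}{K}$ size-$K$ subsets, and then, independently over $t\in[T]$: with probability $\delta\in(0,1]$ (to be tuned) it sets the restriction of $v_t$ to $S^\star$ to a fresh sample from $\mu$, and otherwise to $(x_0,\dots,x_0)$; it always sets $v_{ti}=x_0$ for $i\notin S^\star$. If the learner plays $S_t$ with $|S_t\cap S^\star|=\ell$, then $\bE_{v_t}[R(S_t,v_t)]=(1-\delta)g(Kx_0)+\delta\,\bE_\mu[g(X_1+\cdots+X_\ell+(K-\ell)x_0)]$, which by the translated conditions equals $1-(1+Kx_0)^{-1}$ when $\ell<K$ and $1-(1+Kx_0)^{-1}+\delta\gamma$ when $\ell=K$. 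Hence $S^\star$ is the unique best subset and the per-round regret is exactly $\delta\gamma\cdot\mathbf 1\{S_t\ne S^\star\}$, just as in Lemma~\ref{lem:lower-bound-exchangeable}.

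The crux is to show that the choice feedback $i_t$, not merely the indicator $\mathbf 1\{i_t\ne 0\}$, carries no information about $S^\star$ unless $S^\star$ is played exactly. Conditioning on the ``active'' event and on $|S_t\cap S^\star|=\ell$, the MNL formula (with $v_0=1$) gives $\bP[i_t=0\mid S_t]=1/Z_\ell$, $\bP[i_t=i\mid S_t]=X_{(i)}/Z_\ell$ for $i\in S_t\cap S^\star$, and $\bP[i_t=i\mid S_t]=x_0/Z_\ell$ for $i\in S_t\setminus S^\star$. Since these $K+1$ numbers sum to $1$ and $\mu$ is exchangeable, taking expectations yields the identity $\ell\,\bE_\mu[X_1/Z_\ell]+\big(1+(K-\ell)x_0\big)\rho_\ell=1$; combined with $\rho_\ell=(1+Kx_0)^{-1}$ for $\ell<K$ this forces $\bE_\mu[X_1/Z_\ell]=x_0\rho_\ell$. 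Therefore, for every $S_t\ne S^\star$ the unconditional law of $i_t$ given $S_t$ is \emph{exactly} the reference law $\bP_0(\cdot\mid S_t)$ obtained when all utilities equal $x_0$ (no purchase with probability $(1+Kx_0)^{-1}$, each item with probability $x_0(1+Kx_0)^{-1}$), while for $S_t=S^\star$ it differs from $\bP_0(\cdot\mid S^\star)$ in each coordinate by $O_{g,K}(\delta)$. I expect this step --- recognizing that the ``probabilities sum to one'' constraint of the MNL model, together with exchangeability, automatically upgrades the single scalar equality of Lemma~\ref{lem.existential} into the full per-item matching, so that no strengthening of that lemma is needed --- to be the main obstacle.

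Granting this, the argument closes exactly as in Lemma~\ref{lem:lower-bound-exchangeable}. Let $\bP_S$ be the trajectory law when the secret set is $S$, $\bP_0$ that of the all-$x_0$ adversary, and $T_S=\sum_{t=1}^T\mathbf 1\{S_t=S\}$. The KL chain rule together with the previous paragraph gives $D_{\mathrm{KL}}(\bP_0\|\bP_S)\le \Gamma_{g,K}^2\,\delta^2\,\bE_0[T_S]$ (only the rounds with $S_t=S$ contribute, each by $O_{g,K}(\delta^2)$, using that $\bP_0(\cdot\mid S)$ has full support), and Pinsker's inequality gives $|\bE_0[T_S]-\bE_S[T_S]|\le \Gamma_{g,K}\delta T\sqrt{\bE_0[T_S]}$. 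Lower-bounding $\max_{|S^\star|=K}\bE[\sum_tR(S^\star,v_t)-R(S_t,v_t)]$ by its average over $S^\star$, using $\sum_{|S|=K}\bE_0[T_S]=T$, Cauchy--Schwarz, and $\delta=\min\{1,\sqrt{\binom{N}{K}}/(2\Gamma_{g,K}\sqrt{T})\}$ exactly as there, yields $\Omega_K(\min\{T,\sqrt{\binom{N}{K}T}\})=\Omega_K(\min\{T,\sqrt{N^KT}\})$. The remaining loose end is the claim $x_0\in(0,1)$, needed so that $\bP_0(\cdot\mid S)$ stays bounded away from the boundary and the per-round KL is $O(\delta^2)$ rather than $O(\delta)$ (with $x_0=0$ the same computation only delivers $\Omega_K(\min\{T,N^K\})$); this should follow either directly from the proof of Lemma~\ref{lem.existential} or from a small perturbation of $(\mu,x_0)$ that preserves~\eqref{eq:equality}--\eqref{eq:inequality}.
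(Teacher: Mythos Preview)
Your proposal is correct and follows essentially the same route as the paper: both recognize that the only new issue beyond Lemma~\ref{lem:lower-bound-exchangeable} is the richer categorical feedback $i_t\in S_t\cup\{0\}$, and both dispatch it by the same computation---exchangeability of $\mu$ makes all per-item selection probabilities for $i\in S_t\cap S^\star$ equal, and the constraint that the $K{+}1$ probabilities sum to one then forces that common value to equal $x_0/(1+Kx_0)$, so the full feedback law under any $S_t\neq S^\star$ coincides with the all-$x_0$ reference law. Your flagged loose end about $x_0\in(0,1)$ is indeed resolved directly by the proof of Lemma~\ref{lem.existential}, which explicitly selects $x_0$ in the open interval via Lemma~\ref{lemma:baire}.
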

\begin{proof}
It is easy to verify that $R(S,v_t)=g(\sum_{i\in S}v_{ti})$ with $g(x)=x/(1+x)$.
Since the link function $g$ here is clearly not a polynomial of any degree, the construction in Lemma \ref{lem:lower-bound-exchangeable} should immediately imply an $\Omega(\sqrt{N^K T})$ lower bound.
However, in the multinomial logit choice model for dynamic assortment optimization, the bandit feedback is \emph{not} binary.
This means that expectation calculations in Eq.~(\ref{eq:obs-model}) are not sufficient,
and we have to calculate the KL-divergence between discrete observations directly.

Recall that in the MNL model, the bandit feedback $i_t$ is supported on $S_t\cup\{0\}$, with $\mathbb P[i_t=i|S_t,v_t]=v_{ti}/(1+\sum_{j\in S_t}v_{tj})$
for all $i\in S_t$ and $\mathbb P[i_t=0|S_t,v_t] = 1/(1+\sum_{j\in S_t}v_{tj})$.
Suppose $|S_t\cap S^\star|=\ell<K$. Then
\begin{align}
&\mathbb E_{v_t}[\mathbb P[i_t=0|S_t,v_t]] \nonumber\\
&= 1-\mathbb E_{\mu}[g(X_1+\cdots+X_{\ell}+(K-\ell)x_0)]\nonumber\\
&= 1-g(Kx_0) = 1/(1+Kx_0).\label{eq:proof-mnl-1}
\end{align}
For every $i\in S_t\backslash S^\star$, $v_{ti}\equiv x_0$, and therefore
\begin{align}
&\mathbb E_{v_t}[\mathbb P[i_t=i|S_t,v_t]]\nonumber\\
&= x_0(1-\mathbb E_{\mu}[g(X_1+\cdots+X_{\ell}+(K-\ell)x_0)])\nonumber\\
&= x_0(1-g(Kx_0)) = x_0/(1+Kx_0).\label{eq:proof-mnl-2}
\end{align}
Next consider any $i\in S_t\cap S^\star$. Because $X_1,\cdots,X_\ell$ are exchangeable, the probabilities $\mathbb E_{v_t}[\mathbb P[i_t=i|S_t,v_t]]$ are the same for all $i\in S_t\cap S^\star$. Define $\beta := \mathbb E_{v_t}[\mathbb P[i_t=i|S_t,v_t]]$ for some $i\in S_t\cap S^\star$. By the law of total probability and Eqs.~(\ref{eq:proof-mnl-1},\ref{eq:proof-mnl-2}) we have that
$$
1=\sum_{i\in S_t\cup\{0\}}\mathbb E_{v_t}[\mathbb P(i_t=i|S_t,v_t)] = \ell\beta + \frac{1+(K-\ell)x_0}{1+Kx_0}.
$$
Consequently, 
\begin{equation}
\beta = \frac{1}{\ell}\left[1-\frac{1+(K-\ell)x_0}{1+Kx_0}\right] = \frac{x_0}{1+Kx_0}.
\label{eq:proof-mnl-3}
\end{equation}

Comparing Eqs.~(\ref{eq:proof-mnl-1},\ref{eq:proof-mnl-2},\ref{eq:proof-mnl-3}), we conclude that for any $|S_t|=K$, $S_t\neq S^\star$, 
$\mathbb E_{v_t}[\mathbb P[i_t=0|S_t,v_t]] = 1/(1+Kx_0)$ and $\mathbb E_{v_t}[\mathbb P[i_t=i|S_t,v_t]]=x_0/(1+Kx_0)$ for all $i\in S_t$.
This shows that all $|S_t|=K$, $S_t\neq S^\star$ are information theoretically indistinguishable.
On the other hand, for $S_t=S^\star$, with probability $1-\delta$ all elements of $v_t$ are assigned $v_0$,
for which $\mathbb P[\cdot|S_t=S^\star,v_t]=\mathbb P[\cdot|S_t\neq S^\star,v_t]$.
Hence, $D_{\mathrm{KL}}(\mathbb P_0\|\mathbb P_{S^\star}) \leq O(\delta^2)\times \mathbb E_0[T_{S^\star}]$.
The rest of the proof is identical to the proof of Lemma \ref{lem:lower-bound-exchangeable} when the link function $g$ is not a polynomial.
\end{proof}

\section{Proof of Lemma \ref{lem.existential}}\label{sec:proof-existential}
We first prove the easy direction $2\Rightarrow 1$, whose contrapositive is that if $g$ is a polynomial of degree at most $m-1$, then \eqref{eq:equality} and \eqref{eq:inequality} cannot hold simultaneously. In fact, defining $s(x) :=g(x+bx_0) - g(bx_0)$ and $Y_i:= X_i-x_0$ for all $i\in [m]$, condition \eqref{eq:equality} implies that for all $\ell=1,\cdots,m-1$, it holds that $\bE[s(Y_1+\cdots+Y_\ell)] = 0$. By exchangeability of $(Y_1,\cdots,Y_m)$, this also shows that $\bE[s_{\ell}(Y_1,\cdots,Y_m)]=0$, where for all $\ell=1,2,\cdots,m-1$, 
\begin{align}\label{eq:symmetrized_polynomial}
s_\ell(Y_1,\cdots,Y_m) := \frac{1}{\binom{m}{\ell}}\sum_{S\subseteq [m]: |S|=\ell} s\left(\sum_{i\in S}Y_i \right). 
\end{align}
Since $s$ is a polynomial of degree at most $m-1$ and $s(0)=0$, the following lemma shows that $\bE[s_m(Y_1,\cdots,Y_m)]=0$, a contradiction to \eqref{eq:inequality}. 
\begin{lemma}\label{lemma:comb_identity}
For $m$ reals $y_1,\cdots,y_m$ and any polynomial $s$ of degree at most $m-1$, define $s_0\equiv s(0)$ and $s_\ell$ as in \eqref{eq:symmetrized_polynomial} for $\ell=1,\cdots,m$. Then the following identity holds:
\begin{align*}
\sum_{\ell=0}^m (-1)^{\ell}\binom{m}{\ell} s_\ell(y_1,\cdots,y_m) = 0.
\end{align*}
\end{lemma}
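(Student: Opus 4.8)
The plan is to collapse the alternating binomial sum into a single sum over subsets and then reduce to monomials. By the definition in \eqref{eq:symmetrized_polynomial} we have $\binom{m}{\ell}s_\ell(y_1,\dots,y_m) = \sum_{S\subseteq[m]:\,|S|=\ell} s\big(\sum_{i\in S}y_i\big)$ for $\ell\ge 1$, and $\binom{m}{0}s_0 = s(0) = s\big(\sum_{i\in\emptyset}y_i\big)$, so the identity to be proved is equivalent to
$$
\sum_{S\subseteq[m]} (-1)^{|S|}\, s\Big(\sum_{i\in S}y_i\Big) = 0.
$$
Since both sides are linear in $s$ and every polynomial of degree at most $m-1$ is a linear combination of $1,x,\dots,x^{m-1}$, it suffices to establish this identity when $s(x)=x^k$ for each fixed integer $0\le k\le m-1$.

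For such a monomial I would expand $\big(\sum_{i\in S}y_i\big)^k = \sum_{(i_1,\dots,i_k)\in S^k} y_{i_1}\cdots y_{i_k}$, exchange the order of summation, and group the ordered tuples $(i_1,\dots,i_k)$ according to their underlying set $T=\{i_1,\dots,i_k\}$. A tuple with underlying set $T$ contributes to the inner sum over $S$ precisely for those $S$ with $T\subseteq S\subseteq[m]$, so in $\sum_{S}(-1)^{|S|}\big(\sum_{i\in S}y_i\big)^k$ the coefficient of each product $y_{i_1}\cdots y_{i_k}$ is a fixed integer multiple of $\sum_{S:\,T\subseteq S\subseteq[m]}(-1)^{|S|}$.

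The key observation is the elementary identity $\sum_{S:\,T\subseteq S\subseteq[m]}(-1)^{|S|} = (-1)^{|T|}\sum_{U\subseteq[m]\setminus T}(-1)^{|U|} = (-1)^{|T|}(1-1)^{m-|T|}$, which vanishes whenever $[m]\setminus T\ne\emptyset$, i.e.\ whenever $|T|<m$. Because $k\le m-1$, every underlying set $T$ of a length-$k$ tuple has $|T|\le k\le m-1<m$, so every coefficient is zero and the whole sum vanishes, completing the proof. I do not expect any genuine obstacle here; the only thing to watch is that the collapsing index set $[m]\setminus T$ is nonempty, which is exactly where the hypothesis $\deg s\le m-1$ enters (and is precisely why the identity fails for $s(x)=x^m$, for which $T=[m]$ can occur).
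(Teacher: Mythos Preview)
Your proof is correct and follows essentially the same approach as the paper's: both reduce by linearity to monomials $s(x)=x^d$ with $d\le m-1$, expand the power, and observe that the coefficient of each monomial in the $y_i$'s is an alternating binomial sum that vanishes because the support $T$ (the paper's $I(r)$) has size at most $d<m$. Your rewriting as a single sum over all subsets $S\subseteq[m]$ and your use of ordered tuples in place of multinomial coefficients are minor presentational choices, but the key step---$\sum_{S\supseteq T}(-1)^{|S|}=(-1)^{|T|}(1-1)^{m-|T|}=0$ for $|T|<m$---is exactly the paper's computation $\sum_{\ell\ge I(r)}(-1)^\ell\binom{m-I(r)}{\ell-I(r)}=0$.
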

\begin{proof}
By linearity it suffices to prove Lemma \ref{lemma:comb_identity} for $s(x)=x^d$, $d\in \{0,1,\cdots,m-1\}$. In this case, simple algebra verifies that the coefficient of $\prod_{i=1}^m y_i^{r_i}$ with $r_i\ge 0, \sum_{i=1}^m r_i=d$ in $s_{\ell}(y_1,\cdots,y_m)$ is
$$
\frac{d!}{\prod_{i=1}^m (r_i!)}\cdot \frac{\binom{m-I(r)}{\ell - I(r)}}{\binom{m}{\ell}}, 
$$
where $I(r):= \sum_{i=1}^m \1(r_i>0)$, and $\binom{a}{b}:= 0$ if $b<0$. Consequently, the coefficient of $\prod_{i=1}^m y_i^{r_i}$ in the LHS of the equation is
\begin{align*}
&\frac{d!}{\prod_{i=1}^m (r_i!)} \sum_{\ell=0}^m (-1)^{\ell}\binom{m-I(r)}{\ell-I(r)} \\
&=\frac{d!}{\prod_{i=1}^m (r_i!)} \sum_{\ell=I(r)}^m (-1)^{\ell}\binom{m-I(r)}{\ell-I(r)} =0, 
\end{align*}
where the last identity makes use of the inequality $I(r)\le d<m$. 
\end{proof}

Next we prove the hard direction $1\Rightarrow 2$. The proof makes use of the idea in convex analysis: after fixing $x_0$, the problem is to find an exchangeable distribution of $(X_1,\cdots,X_m)$ from the convex set of all such distributions; moreover, both constraints \eqref{eq:equality} and \eqref{eq:inequality} are linear in the joint distribution, and the Dirac point measure on $(x_0,\cdots,x_0)$ satisfies \eqref{eq:equality} as well as \eqref{eq:inequality} if $>$ is replaced by $\ge$. Therefore, there are two convex sets in total, one being the family of all exchangeable joint distributions and one from the constraints \eqref{eq:equality} and \eqref{eq:inequality}, and their closure has an intersection point, i.e. the Dirac point measure at $(x_0,\cdots,x_0)$. Now our target is to show that these sets have a non-empty intersection \emph{without} taking the closure, and the following lemma characterizes a sufficient condition via duality. 

\begin{lemma}\label{lemma:duality}
For $g\in C([0,b])$ and a fixed $x_0\in [0,1]$, there exists an exchangeable Borel distribution on $[0,1]^m$ such that both \eqref{eq:equality} and \eqref{eq:inequality} hold, if the following condition holds: for every non-zero vector $\lambda = (\lambda_1,\cdots,\lambda_m)$ with $\lambda_m\ge 0$, and the functions $g_\ell(x_1,\cdots,x_m)$ defined as
	\begin{align*} 
	g_\ell(x_1,\cdots,x_m) = \frac{1}{\binom{m}{\ell}}\sum_{S\subseteq[m]: |S|=\ell} g\left(\sum_{i\in S}x_i + (b-\ell)x_0\right) 
	\end{align*}
	for $\ell=1,2,\cdots,m$,
	the point $(x_0,\cdots,x_0)$ is not a global maxima of the function $\sum_{\ell=1}^m \lambda_\ell g_\ell(x)$ over $[0,1]^m$.
\end{lemma}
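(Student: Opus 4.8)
The plan is to convert the feasibility problem for $\mu$ into a finite-dimensional convex-geometry question and settle it by a separating-hyperplane argument, arranged so that the separating functional is exactly the vector $\lambda$ appearing in the hypothesis. First I would symmetrize: for an exchangeable $\mu$, exchangeability gives $\bE_\mu[g(X_1+\cdots+X_\ell+(b-\ell)x_0)]=\bE_\mu[g_\ell(X_1,\dots,X_m)]$ for every $\ell$, since $g_\ell$ averages $g(\sum_{i\in S}X_i+(b-\ell)x_0)$ over $|S|=\ell$ and each summand has the same mean (and, as $x_i,x_0\in[0,1]$ and $b\ge m$, each such argument lies in $[0,b]$ where $g$ is defined). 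Writing $g_0\equiv g(bx_0)$ and $\phi_\ell:=g_\ell-g_{\ell-1}$ --- a symmetric continuous function on $[0,1]^m$ with $\phi_\ell(x_0,\dots,x_0)=0$ --- the two displays \eqref{eq:equality}--\eqref{eq:inequality} are equivalent to $\bE_\mu[\phi_\ell]=0$ for $\ell=1,\dots,m-1$ and $\bE_\mu[\phi_m]>0$.

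Now set $\psi:=(\phi_1,\dots,\phi_m):[0,1]^m\to\bR^m$, let $\mathcal E$ be the convex set of exchangeable Borel probability measures on $[0,1]^m$, and let $\mathcal K:=\{\bE_\mu[\psi]:\mu\in\mathcal E\}$, a convex subset of $\bR^m$. What must be shown is that $\mathcal K$ meets the open half-line $A:=\{c\,e_m:c>0\}$, where $e_m$ is the last coordinate vector. Two elementary observations: $0\in\mathcal K$, realized by $\mu=\delta_{(x_0,\dots,x_0)}$; and $\psi(x)\in\mathcal K$ for every $x\in[0,1]^m$, realized by the symmetrized point mass $\frac1{m!}\sum_\sigma\delta_{(x_{\sigma(1)},\dots,x_{\sigma(m)})}$ together with the symmetry of the $\phi_\ell$.

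I would then argue by contraposition. Suppose $\mathcal K\cap A=\emptyset$, i.e.\ no exchangeable $\mu$ satisfies \eqref{eq:equality} and \eqref{eq:inequality} simultaneously. Then $\mathcal K$ and $A$ are convex sets with disjoint relative interiors (since $A$ is relatively open and misses $\mathcal K$), hence can be properly separated in $\bR^m$: there is a nonzero $\mu\in\bR^m$ and $\alpha\in\bR$ with $\langle\mu,y\rangle\le\alpha\le\langle\mu,z\rangle$ for all $y\in\mathcal K$, $z\in A$. Since $0\in\mathcal K$ forces $\alpha\ge0$ and letting $z=c\,e_m$ with $c\downarrow0$ forces $\alpha\le0$, we get $\alpha=0$; then $c\mu_m\ge0$ for all $c>0$ gives $\mu_m\ge0$, and $\langle\mu,y\rangle\le0$ on $\mathcal K$ yields $\sum_{\ell=1}^m\mu_\ell\phi_\ell(x)\le0$ for every $x\in[0,1]^m$.

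Finally I would undo the symmetrization: with $\lambda_m:=\mu_m$ and $\lambda_\ell:=\mu_\ell-\mu_{\ell+1}$ for $\ell<m$ (an invertible linear change of variables, so $\lambda\ne0$; also $\lambda_m=\mu_m\ge0$), a one-line Abel summation gives $\sum_{\ell=1}^m\lambda_\ell g_\ell(x)=\sum_{\ell=1}^m\mu_\ell\phi_\ell(x)+\mu_1 g(bx_0)\le\mu_1 g(bx_0)$ for all $x\in[0,1]^m$, with equality at $x=(x_0,\dots,x_0)$ because all $\phi_\ell$ vanish there. Hence $(x_0,\dots,x_0)$ is a global maximum of $\sum_\ell\lambda_\ell g_\ell$ over $[0,1]^m$ for a nonzero $\lambda$ with $\lambda_m\ge0$, contradicting the hypothesis and proving the lemma. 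The one point needing care is that $\mathcal K$ and $A$ touch at the origin, so the separation is only proper rather than strict --- which is exactly why the argument is run contrapositively, rather than by producing a point of $\mathcal K$ strictly inside $A$; everything else (the symmetrization identity, the reparametrization $\mu\mapsto\lambda$, and the Abel summation) is routine bookkeeping, and in particular no compactness of $\mathcal E$ or $\mathcal K$ is required.
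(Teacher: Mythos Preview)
Your argument is correct and takes a genuinely different---and more elementary---route than the paper's. The paper separates in the infinite-dimensional space of signed Radon measures on $[0,1]^m$ under the weak-$^\star$ topology: it introduces an $\varepsilon$-perturbed affine subspace $B_\varepsilon$, invokes Banach--Alaoglu to get compactness of the set of exchangeable probabilities, applies a Hahn--Banach separation theorem to produce a continuous functional $f_\varepsilon$, identifies $f_\varepsilon$ as a linear combination of the $g_\ell-g(bx_0)$ via the annihilator of $B_\varepsilon$, and then passes to the limit $\varepsilon\to 0$ along a convergent subsequence of the normalized coefficient vectors. You instead push the problem down to $\bR^m$ via the moment map $\mu\mapsto(\bE_\mu[\phi_1],\dots,\bE_\mu[\phi_m])$, where the feasibility question becomes whether a convex set $\mathcal K\ni 0$ meets the open half-line $\{ce_m:c>0\}$; finite-dimensional proper separation (Rockafellar, Theorem~11.3) then applies directly because the half-line is relatively open, and your Abel-summation change of variables $\mu\mapsto\lambda$ delivers the contradiction without any compactness or limiting argument. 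The gain is that your proof is entirely elementary (no functional analysis, no $\varepsilon\to 0$ device) and makes transparent that only $m$ linear constraints are in play; the paper's approach, while heavier, is perhaps more immediately recognizable as a duality argument in measure space.
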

\begin{proof}
We prove the contrapositive of this result. Let $X$ be the topological vector space of all finite Borel signed measures on $[0,1]^m$ (which are also Radon measures by Ulam's theorem; cf. \citep[Theorem 7.1.4]{dudley2018real}) equipped with the weak-$^\star$ topology (as the dual of $C([0,1]^m)$), and $A\subseteq X$ be the collection of all exchangeable Borel probability measures (i.e. invariant to permutations). Moreover, for $\varepsilon>0$, we define $B_\varepsilon \subseteq X$ as the collection of all signed Borel measures $\mu$ such that
$$\int_{[0,1]^m} (g_\ell(x) - g(bx_0)) \mu(\ud x) = 0$$ for $\ell=1,\cdots,m-1$ and
$$\int_{[0,1]^m} (g_m(x) - g(bx_0) - \varepsilon) \mu(\ud x) = 0.$$
Therefore, the non-existence of such an exchangeable distribution implies that $A\cap B_\varepsilon=\emptyset$ for all $\varepsilon>0$. Since $[0,1]^m$ is compact, the set $A$ is a closed subset of the unit weak-$^\star$ ball, and is therefore compact due to Banach--Alaoglu (see, e.g. \citep[Theorem 3.15]{rudin1991functional}). Moreover, as $g_\ell\in C([0,1]^m)$, the set $B_\varepsilon$ is closed under the weak-$^\star$ topology. Finally, as both sets are convex, and \citep[Theorem 3.10]{rudin1991functional} shows that the dual of $X$ under the weak-$^\star$ topology is $C([0,1]^m)$, \citep[Theorem 3.4]{rudin1991functional} implies that there exist some $f_\varepsilon\in C([0,1]^m)$ and $\gamma_\varepsilon\in \bR$ such that
\begin{equation}\label{eq:continuous_functional}
\sup_{\mu\in A} \int f_\varepsilon \ud\mu < \gamma_\varepsilon \le \inf_{\nu\in B_\varepsilon} \int f_\varepsilon \ud\nu. 
\end{equation}
As $B_\varepsilon$ is a linear subspace of $X$, the RHS of \eqref{eq:continuous_functional} must be zero (otherwise it would be $-\infty$). Then by \citep[Lemma 3.9]{rudin1991functional}, we must have $f_\varepsilon(x) = \sum_{\ell=1}^{m-1} \lambda_{\varepsilon,\ell}(g_\ell(x) - g(bx_0))+\lambda_{\varepsilon,m}(g_m(x) - g(bx_0) - \varepsilon)$ for some vector $\lambda_\varepsilon=(\lambda_{\varepsilon,1},\cdots,\lambda_{\varepsilon,m})$. Plugging this back into the first inequality of \eqref{eq:continuous_functional}, and defining $\mu_0\in A$ as the Dirac point measure on $(x_0,\cdots,x_0)$, we arrive at
\begin{equation}\label{eq:dual}
\sup_{\mu\in A} \int \left(\sum_{\ell=1}^{m} \lambda_{\varepsilon,\ell} g_{\ell}(x)\right)(\mu(\ud x) - \mu_0(\ud x)) < \lambda_{\varepsilon,m}\varepsilon. 
\end{equation}
Since $g_\varepsilon(x) = \sum_{\ell=1}^{m} \lambda_{\varepsilon,\ell} g_{\ell}(x)$ is a symmetric function (i.e. invariant to permutations of the input), the LHS of \eqref{eq:dual} is simply $\max_{x\in [0,1]^m} g_\varepsilon(x) - g_\varepsilon(x_0,\cdots,x_0)\ge 0$. Then $\lambda_{\varepsilon,m}>0$, and by multiplying a positive constant to all entries of $\lambda_\varepsilon$, we may assume that $\max_\ell |\lambda_{\varepsilon,\ell}|=1$ and $\max_{x\in [0,1]^m} g_\varepsilon(x) - g_\varepsilon(x_0,\cdots,x_0) < \varepsilon$. Choosing $\varepsilon_n\to 0$, the compactness of $[-1,1]^m\ni \lambda_{\varepsilon_n}$ implies some subsequence $\lambda_{\varepsilon_{n_k}}\to \lambda$ as $k\to\infty$. Taking the limit along this subsequence, it is clear that $\lambda$ is a non-zero vector with $\lambda_m\ge 0$, and $(x_0,\cdots,x_0)$ is a global maxima of the function $g(x)=\sum_{\ell=1}^m \lambda_\ell g_\ell(x)$. 
\end{proof}

Now it remains to choose a suitable $x_0\in [0,1]$ such that the condition in Lemma \ref{lemma:duality} holds. We choose $x_0$ to be any point in $(0,1)$ such that $g^{(\ell)}(bx_0)\neq 0$ for all $\ell\in [m]$, whose existence is ensured by the following lemma, which itself is a standard exercise on the Baire category theorem. 
\begin{lemma}[A slight variant of \citep{donoghue1969distributions}, Page 53] \label{lemma:baire}
If $g\in C^m$ satisfies $g^{(n_x)}(x)=0$ for some $n_x\in [m]$ and every $x\in (0,1)$, then $g$ is a polynomial of degree at most $m-1$ on $(0,1)$. 
\end{lemma}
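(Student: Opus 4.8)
The plan is the classical Baire-category argument for this kind of ``locally a polynomial'' statement. For $n\in\{1,\dots,m\}$ put $E_n:=\{x\in(0,1): g^{(n)}(x)=0\}$; since $g\in C^m$ each $E_n$ is relatively closed in $(0,1)$, and the hypothesis is precisely that $(0,1)=E_1\cup\cdots\cup E_m$. It suffices to prove the stronger assertion that $g^{(m)}\equiv 0$ on $(0,1)$, since a function with vanishing $m$-th derivative on the connected interval $(0,1)$ is a polynomial of degree at most $m-1$ there (integrate $m$ times).

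The key point is that one should \emph{not} apply Baire directly to the $E_n$: that only produces one subinterval on which some $g^{(n)}$ vanishes, which is not enough to cover all of $(0,1)$. Instead, introduce the open set $U:=\{x\in(0,1): g^{(m)} \text{ vanishes identically on some neighbourhood of } x\}$ and its relatively closed complement $F:=(0,1)\setminus U$; the goal becomes $F=\emptyset$. Suppose $F\neq\emptyset$. Intersecting the cover with $F$ gives $F=\bigcup_{n=1}^m (F\cap E_n)$, a finite union of sets closed in $F$, so by the Baire category theorem some $F\cap E_{n_0}$ has nonempty interior relative to $F$; hence there is an open interval $I$ with $\emptyset\neq I\cap F\subseteq E_{n_0}$. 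Now split into two cases. If $I\cap F$ contains a nondegenerate subinterval $J$, then $g^{(n_0)}\equiv 0$ on $J$, so $g$ coincides with a polynomial of degree $\le n_0-1\le m-1$ on $J$, whence $g^{(m)}\equiv 0$ on $J$ and $J\subseteq U$ --- impossible since $J\subseteq F$. Otherwise $I\cap F$ has empty interior, so $I\cap U=I\setminus F$ is dense in $I$; since $g^{(m)}$ is continuous and vanishes on $U$, it vanishes on all of $I$, so $I\subseteq U$, contradicting $I\cap F\neq\emptyset$. Either way we reach a contradiction, so $F=\emptyset$ and $g^{(m)}\equiv 0$ on $(0,1)$.

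I expect the only genuine obstacle to be the bookkeeping around the Baire category theorem: $F$ is neither compact nor complete in the metric inherited from $\mathbb{R}$. This is harmless because $(0,1)$ is homeomorphic to $\mathbb{R}$, hence completely metrizable, so its closed subset $F$ is completely metrizable too and Baire applies. If one prefers to avoid this remark, one can instead run the identical argument on an arbitrary compact subinterval $[\alpha,\beta]\subset(0,1)$ in place of $(0,1)$ and then take the union over all such subintervals; I would choose whichever reads more cleanly. The remaining ingredients --- $U$ being open, continuity of $g^{(n)}$ for $n\le m$, and passing the vanishing of a continuous function from a dense subset of $I$ to $I$ --- are routine. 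Finally, I would note that the form actually used later is the contrapositive: if $g$ is not a polynomial of degree $\le m-1$, then there is some $x$ at which $g^{(n)}(x)\neq 0$ for every $n\in[m]$, which, applied to the rescaled function $x\mapsto g(bx)$, supplies the point $bx_0\in(0,b)$ with $g^{(\ell)}(bx_0)\neq 0$ for all $\ell\in[m]$.
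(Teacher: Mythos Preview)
Your argument is correct and is precisely the ``standard exercise on the Baire category theorem'' the paper alludes to; the paper itself gives no proof beyond citing Donoghue, so your write-up fills in exactly what was intended. The two-case split on whether $I\cap F$ has interior in $\mathbb{R}$ is clean, the complete-metrizability remark for $F$ is the right way to justify invoking Baire, and your closing observation about the contrapositive and the rescaling $x\mapsto g(bx)$ is how the lemma is actually applied in the surrounding proof.
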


Now we assume by contradiction that the function $f(x):=\sum_{\ell=1}^n \lambda_\ell g_\ell(x)$ attains its maximum at $x=(x_0,\cdots,x_0)$ for some non-zero vector $\lambda$. We will prove by induction on $k=1,2,\cdots,m$ the following claims: 
\begin{enumerate}
	\item $D^\alpha f(x_0,\cdots,x_0)=0$ for all multi-indices $\alpha=(\alpha_1,\cdots,\alpha_m)\in \naturals^m$ with $|\alpha|:= \sum_{\ell=1}^k \alpha_\ell   = k$; 
	\item $\sum_{\ell=1}^m \lambda_\ell\cdot \binom{m-k}{\ell-k}  / \binom{m}{\ell} = 0$, with $\binom{0}{0}:= 1$ and $\binom{a}{b}:=0$ for $b<0$. 
\end{enumerate}
For the base case $k=1$, the first claim is exactly the first-order condition for the maxima, and the second claim follows from the first claim, the identity $\nabla g_\ell(x_0,\cdots,x_0) = g'(bx_0){\vct 1}\cdot \binom{m-1}{\ell-1}/\binom{m}{\ell}$, and our choice of $x_0$ that $g'(bx_0)\neq 0$. Now suppose that these claims hold for $1,2,\cdots,k-1$. Then for $|\alpha|=k$ with $I(\alpha):= \sum_{\ell=1}^m \1(\alpha_\ell>0) \le k-1$, simple algebra gives that
\begin{align*}
D^\alpha f(x_0,\cdots,x_0) = g^{(k)}(bx_0)\cdot \sum_{\ell=1}^m \lambda_\ell \frac{\binom{m-I(\alpha)}{\ell-I(\alpha)}}{\binom{m}{\ell}} = 0, 
\end{align*}
where the last identity is due to the inductive hypothesis for $k' = I(\alpha) < k$. Therefore, for the first claim it remains to consider multi-indices $\alpha$ with $|\alpha|=I(\alpha)=k$. By the inductive hypothesis and the Taylor expansion for multivariate functions, for $\delta>0$ sufficiently small and every binary vector $(\varepsilon_1,\cdots,\varepsilon_m)\in \{\pm 1\}^m$, it holds that
\begin{align*}
&f(x_0+\delta \varepsilon_1,\cdots,x_0+\delta \varepsilon_m) = f(x_0,\cdots,x_0) \\
&+ g^{(k)}(bx_0) \sum_{\ell=1}^m \lambda_\ell \frac{\binom{m-k}{\ell-k}}{\binom{m}{\ell}}  \delta^k\sum_{S\subseteq[m]: |S|=k}\prod_{i\in S}\varepsilon_i + o(\delta^k). 
\end{align*}
As $\bE[\sum_{S\subseteq[m]: |S|=k}\prod_{i\in S}\varepsilon_i] = 0$ when $\varepsilon_1,\cdots,\varepsilon_m$ are Radamacher random variables, and the term inside the expectation is not identically zero, we conclude that this term could be either positive or negative after carefully choosing $(\varepsilon_1,\cdots,\varepsilon_m)$. As a result, in order for $(x_0,\cdots,x_0)$ to be a maxima of $f$, the above expansion implies that $\sum_{\ell=1}^m \lambda_\ell\cdot \binom{m-k}{\ell-k}  / \binom{m}{\ell} = 0$ (recall that $g^{(k)}(bx_0)\neq 0$), which is exactly the second claim for $k$. The remainder of the first claim also follows from the second claim, and the induction is complete. 

Finally we derive a contradiction from the above result, and thereby show that such a non-zero vector $\lambda$ cannot exist. In fact, the second claim of the inductive result for $k=1,2,\cdots,m$ constitutes a linear system $A\lambda=0$ for the vector $\lambda$, where $A$ is a upper triangular matrix with non-zero diagonal entries. Therefore, we have $\lambda=0$, which is a contradiction.

\section{Conclusion and future directions}

In this paper study the adversarial combinatorial bandit problem with general reward functions $g$,
with a complete characterization of the minimax regret depending on whether $g$ is a low-degree polynomial.
For the most general case when $g$ is \emph{not} a polynomial, including dynamic assortment optimization under the multinomial logit choice models,
our results imply an $\Omega_K(\sqrt{N^K T})$ regret lower bound, hinting that it is not possible for any bandit algorithm to exploit inherent correlation among subsets/assortments.
We believe it is a promising future research direction to study models that interpolate between the stochastic and the adversarial settings,
in order to achieve intermediate regret bounds.

When $g$ is a general non-linear function, the adversarial construction in our lower bound proof can be interpreted as a \emph{latent variable model}: 
first sample $v_t\sim\mu$, and then sample $r_t\sim\mathbb P(\cdot|S_t,v_t)$.
To foster identifiability, one common assumption is to have access to multiple observations $\{r_t^1,\cdots,r_t^M\}$ conditioned on the same latent variable $v_t$,
such as ``high dimensionality'' assumptions in learning Gaussian mixture models \citep{ge2015learning} and learning topic models with multiple words per document \citep{anandkumar2015spectral}.
This motivates the following model: at the beginning of each $\tau\in\{1,2,\cdots,T/M\}$ epoch, an adaptive adversary chooses $v_\tau\in[0,1]^N$;
afterwards, the bandit algorithm produces subsets $\{S_\tau^1,\cdots,S_\tau^M\}\subseteq[N]$ and observes feedback $r_\tau^t\sim \mathbb P(\cdot|S_\tau^t,v_\tau)$ for $t=1,2,\cdots,M$.
Clearly, with $M=T$ we recover the stochastic (stationary) setting in which $\{v_t\}$ do not change, and with $M=1$ we recover the adversarial setting in which $v_t$ is different for each time period. 
An intermediate value of $1<M<T$ is likely to result in interesting interpolation between the two settings, 
and we leave this as an interesting future research question.
\bibliography{di}
\bibliographystyle{icml2021}

\appendix


\end{document}